\theoremstyle{plain}
\newtheorem{theorem}{Theorem}[section]
\newtheorem{lemma}[theorem]{Lemma}
\theoremstyle{definition}
\theoremstyle{remark}
\newcommand{\cF}{\mathcal{F}}
\newcommand{\radius}{\mathrm{Radius}}
\newcommand{\abs}[1]{\left| #1 \right|}
\newcommand{\bOne}[1]{\mathds{1} \! \left\{#1\right\}}
\newcommand{\bracket}[1]{\left(#1\right)}
\newcommand{\set}[1]{\left\{ #1 \right\}}
\newcommand{\EE}[1]{\mathbb{E} \left[#1\right]}
\newif\ifsup\supfalse
\DeclareMathOperator*{\argmax}{argmax}
\DeclareMathOperator*{\argmin}{argmin}
\icmltitlerunning{Simultaneously Learning Stochastic and Adversarial Bandits with General Graph Feedback}
\begin{document}

\twocolumn[
\icmltitle{Simultaneously Learning Stochastic and Adversarial Bandits with\\ General Graph Feedback
}

% It is OKAY to include author information, even for blind
% submissions: the style file will automatically remove it for you
% unless you've provided the [accepted] option to the icml2022
% package.

% List of affiliations: The first argument should be a (short)
% identifier you will use later to specify author affiliations
% Academic affiliations should list Department, University, City, Region, Country
% Industry affiliations should list Company, City, Region, Country

% You can specify symbols, otherwise they are numbered in order.
% Ideally, you should not use this facility. Affiliations will be numbered
% in order of appearance and this is the preferred way.
\icmlsetsymbol{equal}{*}

\begin{icmlauthorlist}
\icmlauthor{Fang Kong}{yyy}
\icmlauthor{Yichi Zhou}{comp}
\icmlauthor{Shuai Li}{yyy}
% \icmlauthor{Firstname4 Lastname4}{sch}
% \icmlauthor{Firstname5 Lastname5}{yyy}
% \icmlauthor{Firstname6 Lastname6}{sch,yyy,comp}
% \icmlauthor{Firstname7 Lastname7}{comp}
%\icmlauthor{}{sch}
% \icmlauthor{Firstname8 Lastname8}{sch}
% \icmlauthor{Firstname8 Lastname8}{yyy,comp}
%\icmlauthor{}{sch}
%\icmlauthor{}{sch}
\end{icmlauthorlist}

\icmlaffiliation{yyy}{John Hopcroft Center for Computer Science, Shanghai Jiao Tong University, Shanghai, China}
\icmlaffiliation{comp}{Microsoft Research Asia, Beijing, China}
% \icmlaffiliation{sch}{School of ZZZ, Institute of WWW, Location, Country}

\icmlcorrespondingauthor{Shuai Li}{shuaili8@sjtu.edu.cn}
% \icmlcorrespondingauthor{Firstname2 Lastname2}{first2.last2@www.uk}

% You may provide any keywords that you
% find helpful for describing your paper; these are used to populate
% the "keywords" metadata in the PDF but will not be shown in the document
\icmlkeywords{Machine Learning, ICML}

\vskip 0.3in
]

% this must go after the closing bracket ] following \twocolumn[ ...

% This command actually creates the footnote in the first column
% listing the affiliations and the copyright notice.
% The command takes one argument, which is text to display at the start of the footnote.
% The \icmlEqualContribution command is standard text for equal contribution.
% Remove it (just {}) if you do not need this facility.

\printAffiliationsAndNotice{}  % leave blank if no need to mention equal contribution
% \printAffiliationsAndNotice{\icmlEqualContribution} % otherwise use the standard text.

% \shuai{add however}% \shuai{separately.} and most of them treat these two types separately. 
% However,\shuai{delete however} 

\begin{abstract}
The problem of online learning with graph feedback has been extensively studied in the literature due to its generality and potential to model various learning tasks. Existing works mainly study the adversarial and stochastic feedback separately. If the prior knowledge of the feedback mechanism is unavailable or wrong, such specially designed algorithms could suffer great loss. To avoid this problem, \citet{erez2021towards} try to optimize for both environments. However, they assume the feedback graphs are undirected and each vertex has a self-loop, which compromises the generality of the framework and may not be satisfied in applications. With a general feedback graph, the observation of an arm may not be available when this arm is pulled, which makes the exploration more expensive and the algorithms more challenging to perform optimally in both environments. In this work, we overcome this difficulty by a new trade-off mechanism with a carefully-designed proportion for exploration and exploitation. We prove the proposed algorithm simultaneously achieves $\mathrm{poly} \log T$ regret in the stochastic setting and minimax-optimal regret of $\tilde{O}(T^{2/3})$ in the adversarial setting where $T$ is the horizon and $\tilde{O}$ hides parameters independent of $T$ as well as logarithmic terms. To our knowledge, this is the first best-of-both-worlds result for general feedback graphs.

\end{abstract}

%!TEX root =  main.tex

\section{Introduction}

\begin{table*}[htbp]
\centering
\begin{tabular}{lll}
\toprule
      & Regret bound (stochastic) & Regret bound (adversarial) \\ \hline
\rule{0pt}{15pt} \citet{wu2015online} &    $O\bracket{|D|\log T/\Delta^2}$,   $\Omega \bracket{|D|\log T/\Delta^2}$                                                              & -                                                                    \\ \hline
\rule{0pt}{15pt} \citet{alon2015online} & -                                                                   &  $O\bracket{(|D|\log K)^{\frac{1}{3}}T^{\frac{2}{3}}}$, $\Omega\bracket{ (|D|/\log^2 K)^{\frac{1}{3}}T^{\frac{2}{3}} }$
\\ \hline
\rule{0pt}{15pt} \citet{chen2021understanding}  & -                                                                   &  $O\bracket{(\delta^*\log K)^{\frac{1}{3}}T^{\frac{2}{3}}}$ ,  $\Omega\bracket{ (\delta^*/\alpha)^{\frac{1}{3}}T^{\frac{2}{3}} }$                                                        \\ \hline
\rule{0pt}{15pt} Ours  &   $O\bracket{ |D|^2 \bracket{\log T/\Delta^2}^{\frac{3}{2}} }$                                          &    $O\bracket{|D|^{\frac{1}{3}}K^{\frac{2}{3}}T^{\frac{2}{3}}\sqrt{\log T}}$                                                                  \\ \bottomrule
\end{tabular}
\caption{Comparisons of regret bounds with most related works in different environments for online learning with general graph feedback. $T$ is the horizon, $K$ is the number of arms, $\Delta$ is the minimum reward gap between the optimal and sub-optimal arms, $D$ is the dominating set of the feedback graph, $\delta^*$ is the fractional weak domination number with $\delta^*\le|D|$ and $\alpha=O(K/\delta^*)$ is the integrality gap of the linear program for vertex packing. The regret bounds in \citet{wu2015online} do not show explicit dependence on $D$ and $\Delta$. We compute the order of them in the bar graph which consists of $K/2$ disjoint undirected edges which connect $K$ arms. }
\label{table:comparison}
\end{table*}

The online learning problem can be formulated by a repeated game between the learner and an unknown environment \cite{cesa2006prediction,lattimore2020bandit}. 
The environment contains $K$ arms. 
In each round, the learner selects one among these arms and observes some feedback. 
At the end of the round, it obtains the reward of the selected arm. 
The objective of the learner is to minimize the cumulative regret over a specified horizon, defined as the difference between the cumulative reward of the optimal arm and that of the arms selected.

To achieve this goal, the learner should adjust its strategy according to the feedback it observes. One of the most widely-studied feedback models is \textit{bandit feedback}, where the learner only observes the reward of the selected arm in each round \cite{auer2002finite,lattimore2020bandit}. 
The other extreme is called \textit{full feedback}, where the learner is able to observe the reward of all arms no matter which arm it selects \cite{cesa2006prediction,eban2012learning}. 
Both these two extremes are special cases of a general feedback model, where the feedback is characterized by a directed graph $G=([K],E)$ \cite{mannor2011bandits}. An edge $(i,j) \in E$ implies that the learner can observe arm $j$ when selecting arm $i$.
Under this framework, the graph of bandit feedback is composed of $K$ disjoint self-loops and that of full feedback is a clique with self-loops. 
Benefiting from its generality and potential to model various learning tasks, graph feedback has been widely studied in the literature \cite{wu2015online,alon2015online,li2020stochastic,chen2021understanding}. 

Previous works on graph feedback mainly study two standard environments: stochastic \cite{buccapatnam2014stochastic,wu2015online,tossou2017thompson,li2020stochastic} and adversarial \cite{mannor2011bandits,alon2015online,chen2021understanding}. In the stochastic setting, the rewards of an arm are drawn independently from a fixed distribution. While in the adversarial setting, the rewards can be chosen arbitrarily. 
All of these works consider observable feedback graphs, i.e., each arm can be observed by selecting some arms. 
It is not hard to see that observability is necessary to guarantee sub-linear regret \cite{alon2015online}. 
With a general observable graph, the optimal regret in both settings has been well understood. 
In the stochastic setting, \citet{wu2015online} develop the instance-optimal regret of order $\Theta(\log T)$. And in the adversarial setting, \citet{alon2015online,chen2021understanding} show the minimax-optimal regret is $\Theta(T^{2/3})$.

However, all these works treat different environment types separately. 
Once the prior knowledge of the reward type is unknown or wrong, the algorithms can suffer great loss. 
So a natural question is whether there is an algorithm that can simultaneously achieve $\mathrm{poly}\log T$ regret in the stochastic setting and $\tilde{O}(T^{2/3})$ regret in the adversarial setting. 

Similar best-of-both-worlds (BoBW) questions have been extensively studied under the bandit feedback \cite{bubeck2012best,seldin2014one,auer2016algorithm,seldin2017improved,zimmert2019optimal} and a more general undirected feedback graph with self-loops \cite{erez2021towards}. 
But as shown in the literature \cite{wu2015online,alon2015online}, even in a known single-type environment, learning with general feedback graphs (possibly without self-loops) is much harder. 
It is because the observations on an arm may not be available by simply selecting it, which makes the exploration of arms more expensive and the exploration-exploitation trade-off more challenging. 
This problem becomes especially difficult when trying to optimize for two different environments. 
Specifically, to achieve instance-optimality, the algorithm tends to first select arms with more observations to identify the optimal one. But these arms may have low rewards, which destroys the minimax-optimality over a specified horizon. 
So despite the importance of the above BoBW results under special feedback graphs, it is still open whether such type of results can be obtained with general graph feedback.

In this paper, we make progress on this problem. 
We develop a new mechanism to trade off exploration and exploitation with carefully-designed proportions for these two parts. 
Equipped with this mechanism and the carefully-designed testing for the environment type, the algorithm is proved to achieve $\mathrm{poly}\log T$ regret in the stochastic setting and $\tilde{O}(T^{2/3})$ regret in the adversarial setting. 
Table \ref{table:comparison} compares the regret bounds with most related works. It is shown that, compared with the state-of-the-art algorithms in both settings, our algorithm only suffers from additional logarithmic and constant factors in the regret while without prior knowledge of the environment type.

%!TEX root =  main.tex

\section{Related Work}

The study of online learning with bandit feedback has a long history
\cite{thompson1933likelihood,lai1985asymptotically}. This problem has been well studied in both stochastic \cite{auer2002finite} and adversarial setting \cite{auer2002nonstochastic}.
To generalize the feedback type, 
\citet{mannor2011bandits} first introduce the framework of graph feedback with self-loops in the adversarial setting, which problem has been extensively studied by the following works \cite{kocak2014efficient,cohen2016online}. 
\citet{alon2015online} successfully remove the assumption of self-loops. They develop the Exp3.G algorithm and establish theoretical guarantees for graphs with different observabilities. 
To be specific, the optimal regret is $\Theta(T)$ for non-observable graphs, $\tilde{\Theta}(\sqrt{T})$ for strongly-observable graphs, and $\tilde{\Theta}(T^{2/3})$ for general observable graphs, where a strongly-observable graph requires each arm to have a self-loop or have in-edges from all of the other arms. 
The $\tilde{\Theta}(T^{2/3})$ regret also indicates the hardness to learn with a general feedback graph, which has been recently improved by \citet{chen2021understanding} in terms of the dependence on graph structure. 
In the stochastic setting, \citet{wu2015online} first consider general observable graphs and show the optimal regret is of order $\mathrm{poly} \log T$. \citet{li2020stochastic} later generalize this result by considering probabilistic graph feedback.

All of the above works separately study different environment types. A natural question is whether an algorithm can be derived to achieve the BoBW results. 
The BoBW problem was first studied by \citet{bubeck2012best} in the bandit setting. 
They propose an algorithm and show it achieves not only $\mathrm{poly}\log T$ regret in the stochastic environment but also $\tilde{O}(\sqrt{T})$ regret in the adversarial environment.  
Their algorithm is based on several consistency conditions to detect the optimality of the arms and the type of the environment, which is later refined by \citet{auer2016algorithm}. 
A similar mechanism has also been adopted to obtain the BoBW result for linear bandit \cite{lee2021achieving}. 
The other line of BoBW works are variants of online mirror descent (OMD) \cite{zimmert2019optimal}, which achieves better results than \citet{bubeck2012best,auer2016algorithm} in terms of logarithmic factors. 
Apart from bandit feedback, the BoBW problem has been recently studied with more general feedback graphs, though the graphs are assumed to be undirected and contain a self-loop on each arm \cite{erez2021towards}.  
The algorithm in this work is a variant of OMD, which achieves $\mathrm{poly}\log T$ regret in the stochastic setting and $O(\sqrt{T})$ regret in the adversarial setting.
Recently, \citet{lu2021corruption} study a model between stochastic and adversarial, where the rewards are generated stochastically with a small fraction being corrupted adversarially. 
Their work is also only applicable to undirected feedback graphs with self-loops. 
To the best of our knowledge, we are the first to give the BoBW results for online learning with general graph feedback.

%!TEX root =  main.tex
\section{Setting}

The problem can be characterized by a graph $G=(V,E)$, where $V=\set{1,2,\ldots,K}$ is the set of $K$ arms and $E=\set{(i,j)}$ is the set of directed edges between arms. For each arm $i$, denote $N^{\mathrm{in}}(i) = \set{j\in V: (j,i)\in E}$ as the set of in-neighbors and $N^{\mathrm{out}}(i) = \set{j\in V: (i,j)\in E}$ as the set of out-neighbors of $i$.
 In each round $t = 1,2\ldots$, the learner selects an arm $I_t \in V$ and the environment provides a reward vector $r_t := (r_t(1),r_t(2),\ldots,r_t(K) ) \in [0,1]^K$. 
The learner is then rewarded with $r_{t}(I_t)$ and observes $r_{t}(j)$ for each arm $j\in N^{\mathrm{out}}(I_t)$.

% and observe the reward $r_{t}(j)$ for each arm $j\in N^{\mathrm{out}}(I_t)$.
% Before the game starts, the environment privately generates a sequence of reward vectors $r_1,r_2,\ldots$, where $r_t := (r_t(1),r_t(2),\ldots,r_t(K) ) \in [0,1]^K$. In the following each round $t = 1,2\ldots$, the learner selects an arm $I_t \in V$ and observe the reward $r_{t}(j)$ for each arm $j\in N^{\mathrm{out}}(I_t)$.
% At the end of the round, the learner is rewarded with $r_{t}(I_t)$. 

% There is a set of $K$ arms denoted by $V=\set{1,2,\ldots,K}$. 
% Before the game starts, the environment privately generates a sequence of reward vectors $r_1,r_2,\ldots$, where $r_t \in [0,1]^K$.  
% In the following each round $t = 1,2\ldots$, the learner selects an arm $I_t \in V$ and obtains the reward $r_{t}(I_t)$.

% The feedback is characterized by a graph $G=(V,E)$, where $E=\set{(i,j)}$ is the set of directed edges between arms. For each arm $i$, denote $N^{\mathrm{in}}(i) = \set{j\in V: (j,i)\in E}$ as the set of in-neighbors and $N^{\mathrm{out}}(i) = \set{j\in V: (i,j)\in E}$ as the set of out-neighbors of $i$.
% If the learner in some round $t$ selects arm $i$, it can observe the reward $r_{t}(j)$ for each arm $j\in N^{\mathrm{out}}(i)$.

It is not hard to see that if some arm $i$ is unobservable, i.e.,  $N^{\mathrm{in}}(i)=\emptyset$, the learner could not determine which arm is optimal. 
Therefore, we consider a general observable graph, i.e.,  $N^{\mathrm{in}}(i)\neq\emptyset$ for each arm $i\in V$. 
It is worth noting that in such a general graph, there may exists arm $i$ with $(i,i)
\notin E$. In this case, the learner though obtains the reward $r_{t}(i)$ by selecting arm $i$, has no chance to get the observation on it. %\zyc{A graph is weakly observable if $(i,i)\notin E$ and $\exists j\neq i, (j,i)\notin E$. Otherwise, we say the graph is strongly observable. \citet{erez2021towards} developed an efficient BoBW algorithm for strongly observable graphs. However, it is well-known that trading off exploration and exploitation on weakly observable graphs is much harder than that on strongly observable graphs \cite{?}.}

% The rewards of arms are generated either in a stochastic or adversarial way. 

% We consider two ways to generate the reward: adversarial and stochastic. 

Stochastic and adversarial are two standard environments studied in the literature \cite{wu2015online,alon2015online,li2020stochastic,chen2021understanding}. 
In the stochastic setting, the reward vector $r_t$ at time $t$ is independently drawn from a fixed reward distribution with expectation $\EE{r_t} = \mu := (\mu_1,\mu_2,\ldots, \mu_K)$. Denote $i^* = \argmax_{i\in V}\mu_i$ as the optimal arm, which we assume is unique as previous works \cite{wu2015online,li2020stochastic,erez2021towards}. And for each arm $i\in V$, define $\Delta_i = \mu_{i^*} - \mu_i$ as the expected reward gap of $i$ compared with $i^*$. 
In this setting, the pseudo regret is defined as \begin{align}
     Reg(T) = \sum_{t=1}^T \mu_{i^*} - \mu_{I_t}\,.\label{eq:def:regret:sto}
\end{align}
The instance-optimal expected regret attracts more interest in this setting, which is shown to be of order $\Theta(C(\mu)\log T)$ \cite{wu2015online}, where $C(\mu)$ is a constant related to the specific problem instance. 
The adversarial setting is more general than the stochastic setting, where the reward vector $r_t$ can be chosen arbitrarily. 
Following previous works \cite{alon2015online,chen2021understanding}, we consider the oblivious adversary where the rewards are chosen at the start of the game. 
The pseudo regret in this setting is defined as
\begin{align}
    Reg(T) = \max_{i\in V}\sum_{t=1}^T r_{t}(i) - \sum_{t=1}^T r_{t}(I_t) \,.  \label{eq:def:regret:adv}
\end{align}
Researchers are interested in the minimax-optimal expected regret in this setting, which is of order $\Theta(T^{2/3})$ \cite{alon2015online,chen2021understanding}.

% where the expectation is taken over the randomness of the learner's strategy and the reward generation. Specifically, in the stochastic setting, the regret is reduced to $Reg(T) = \sum_{i\in V}\Delta_i \EE{T_i(T)}$, where $T_i(T) = \sum_{t=1}^T \bOne{I_t=i}$ is the number of pulls on arm $i$. In the adversarial setting, since the reward vector is deterministic, the above regret matches the usual definition \cite{alon2015online,chen2021understanding}. 

% It is known that the instance-optimal expected regret in the stochastic setting is $\Theta(C(\mu)\log T)$ \cite{wu2015online} and the minimax-optimal expected regret in the adversarial setting is $\Theta(T^{2/3})$ \cite{alon2015online,chen2021understanding}, where . 

% \fang{
% pre:

% setting

% introduce dominating set and active arms

% challenge: 
% }

%!TEX root =  main.tex

\begin{algorithm*}[thb!]
    \caption{BoBW with General Graph Feedback}\label{alg}
    \begin{algorithmic}[1]
    \STATE Input: graph $G=(V,E)$, dominating set $D$, hyper-parameter $\set{\gamma_t}_{t}$ \label{alg:input}\\
    \STATE Initialize: active arm set $A \gets V$; $\forall i \in V, \tau_i \gets \infty, \tau'_i \gets \infty$;  \\
    active dominating set $D_{A}\gets D$; $\forall i \in D, \tau_i^D \gets \infty, u_{i} \gets \frac{1}{|D|}$ \label{alg:initial}\\
    \FOR{$t=1,2,\ldots$}
        \FOR{$i\in V$}
            \STATE $p_{t,A}(i) \gets \bOne{i \in A}\cdot \frac{1}{|A|}$ \label{alg:exploit:probability}
            \STATE $p_{t,D}(i) \gets\bOne{i \in D_{A}}  \cdot \bracket{1 - \sum_{j\in D\setminus D_A} \frac{u_j \tau_j^D}{t} }\frac{1}{|D_A|}  +\bOne{i \in D\setminus D_A}\cdot  \frac{u_i \tau_i^D}{t}$ \label{alg:explore:probability}
            \STATE $p_{t}(i) \gets (1-\gamma_t)\cdot p_{t,A}(i) + \gamma_t\cdot p_{t,D}(i)$  \label{alg:select:probability}
        \ENDFOR
        \STATE Select $I_t \sim p_t$ and be rewarded with $r_t(I_t)$; observe the reward $\set{(i,r_{t}(i)), \forall i \in N^{\mathrm{out}}(I_t) )}$ \label{alg:select and obtain}
        % \STATE 
        % Observe the reward $\set{(i,r_{t}(i)), \forall i \in N^{\mathrm{out}}(I_t) )}$ \label{alg:observe}
        \STATE For each $i \in V$, compute
        \begin{align}
            \tilde{H}_t(i) \gets \frac{1}{t}\sum_{s=1}^t  \frac{r_{t}(i)\cdot \bOne{i\in N^{\mathrm{out}}(I_t)}}{\sum_{j\in N^{\mathrm{in}}(i) } p_{t}(j) } \,. \label{eq:alg:defH}
        \end{align}
        \FOR{$i \in A$ such that \begin{align}
            &\tilde{H}_{t}(j') - \tilde{H}_{t}(i) > 5\radius_t(j')+3\radius_t(i) \text{ for } j'\in \argmax_{j\in A}\tilde{H}_{t}(j)  \label{eq:alg:delete:condition}\\
            &\text{where } \mathrm{Radius}_t(j)=\sqrt{ 4\bracket{ \frac{|D|}{t^2}\sum_{s=1}^{\min\set{t,\tau'_j}} \frac{1}{\gamma_s} + \frac{ |D| \cdot \max\set{t-\tau'_j, 0} }{\gamma_t \tau'_j t }  }\log \frac{t}{\delta} + \frac{5|D|^2 }{\gamma_t^2 \min\set{t^2, {\tau'_j}^2 } }\log^2 \frac{t}{\delta} }, \forall j \label{eq:alg:def:radius}
            % 6\sqrt{\frac{4|D|}{t^2} \sum_{s=1}^t \frac{1}{\gamma_s} \log\frac{t}{\delta} +\frac{5|D|^2}{\gamma_t^2 t^2}\log^2 \frac{t}{\delta}  } 
        \end{align} } 
        \STATE $A \gets A\setminus\set{i}, \tau_i \gets t $  \label{alg:delete arm from A}
        \ENDFOR
        \FOR{$j\in D_A$ such that $i\notin A, \forall i \in N^{\mathrm{out}}(j)$ or $|A|=1$}\label{alg:delete arm from DA:start}
            \STATE $D_A \gets D_A\setminus\set{j}, \tau_j^D \gets t, u_j \gets p_{t,D}(j)$ \label{alg:delete arm from DA}
        \ENDFOR
        \FOR{$i\in V$ such that $ \tau'_i >t$ and $\tau_j^D \le t, \forall j\in D$ with $i\in N^{\mathrm{out}}(j)$ }
            \STATE $\tau'_i \gets t$ \label{alg:update:tau_prime}
        \ENDFOR \label{alg:update:tau_prime:end}
        \IF{$\exists i \notin A$ such that 
        \begin{align}
            \tilde{H}_{t}(j') - \tilde{H}_{t}(i) \le 3\mathrm{Radius}_t(j')+ \mathrm{Radius}_t(i) \text{ for }  j'\in \argmax_{j\in A}\tilde{H}_{t}(j)  \label{eq:alg:detect:adv}
        \end{align}
        
        }
        \STATE Start Exp3.G and set $\tau \gets t$ \label{alg:detect:adv}
        \ENDIF
    \ENDFOR
    \end{algorithmic}
\end{algorithm*}

\section{Algorithm}

% \fang{$ \tilde{O}, u_j, radius $}

In this section, we present our BoBW algorithm for online learning with general graph feedback (Algorithm \ref{alg}). 
% which is referred as BoBW with General Graph Feedback (Algorithm \ref{alg}). 

The algorithm takes the feedback graph $G$ and a dominating set $D$ of the graph as input (Line \ref{alg:input}). A dominating set contains arms whose out-neighbors can cover the whole arm set, i.e., $\cup_{j\in D} N^{\mathrm{out}}(j)=V$. Thus any arm can be observed by selecting arms in $D$. 

At a high level, the algorithm starts with the assumption that the environment is stochastic and continuously monitors whether the assumption is satisfied. 
Typical optimal stochastic algorithms \cite{wu2015online,li2020stochastic} first explores arms in the dominating set to collect more observations until the optimal arm is identified, and then focuses on the estimated optimal one. 
But such type of algorithms would fail if the underlying environment is adversarial, since the regret caused by exploring arms cannot be controlled before the environment is detected to be adversarial. 

% at some round $t$, the regret caused by exploring arms in the dominating set can be linear in $t$. 

As in the classical multi-armed bandits (MAB) problem, to minimize the cumulative regret, the learner needs to trade off exploitation and exploration. The former focuses on arms with high observed rewards so far to maintain high profits, while the latter aims to select unfamiliar arms to collect more observations \cite{auer2002finite,lattimore2020bandit}. 
However, with general graph feedback, the observation of an arm may not be available by selecting it. Thus, to collect observations on an arm $i$, the learner needs to select some arms from $N^{\mathrm{in}}(i)$. 
% \zyc{} %Thus the learner needs to separately conduct exploitation and exploration. 
To this end, the algorithm maintains the active arm set $A$ and the active dominating set $D_A$, which are initialized as $V$ and $D$, respectively (Line \ref{alg:initial}). 
% Intuitively, $A$ contains arms which the learner considers might be optimal, and $D_A$ is a dominating set of $A$. 
Intuitively, $A$ is the candidate set of arms with the potential to be optimal and $D_A$ is a dominating set of $A$.
The learner can then explore arms in $D_A$ to get more observations and exploit arms in $A$ to collect more rewards. How to balance these two parts is the key to achieving optimal regret in both stochastic and adversarial environments. The hyper-parameter $(\gamma_t)_{t}$ is taken as input to control this trade-off (Line \ref{alg:input}).

For the proceeding of the above exploration-exploitation trade-off, the algorithm maintains several indicators for arms. Specifically, for each arm $i\in V$, denote $\tau_i$ as the time when $i$ is considered as sub-optimal and deleted from active set $A$. 
Then if the environment is truly stochastic, there is no need to exploit $i$ anymore after $\tau_i$. 
Similarly, for each arm $j\in D$, let $\tau^D_j$ represent the time when all of $j$'s out-neighbors have been considered as sub-optimal and $j$ is deleted from $D_A$. 
Based on this definition, the learner has no need to explore $j$ for more observations on $N^{\mathrm{out}}(j)$ after $\tau_j^D$.
Besides, for each arm $i\in V$, denote $\tau'_i = \argmin_{j \in D, j\in N^{\mathrm{in}}(i)}\tau^D_j$ as the time when all of $i$'s dominating arms have been deleted from $D_A$. All of above values are initialized as $\infty$ at the beginning (Line \ref{alg:initial}).

To deal with the potential adversary, the learner needs to adopt a randomized strategy. Let $p_t$ be the action distribution at $t$. The probability $p_t(i)$ of selecting $i$ at $t$ comes from two parts: the exploitation probability $p_{t,A}(i)$ (Line \ref{alg:exploit:probability}) and the exploration probability $p_{t,D}(i)$ (Line \ref{alg:explore:probability}). 
Since arms in $V\setminus A$ are considered to be sub-optimal, the learner only needs to exploit arms in $A$. Here the algorithm conducts exploitation through a uniform distribution over active arm set $A$ as Line \ref{alg:exploit:probability}.  
And for arms in the active dominating set $i\in D_A$, the learner also needs to explore $i$ to get more observations on its out-neighbors, which is also implemented by a uniform distribution over $D_A$ (Line \ref{alg:explore:probability}). 
% Besides, for those arms who are deleted from $A$, though we do not need to exploit them to get more rewards, observations on them are still needed to detect whether an adversarial environment is possible to make these deleted arms get better. 
Besides, since the environment still has the potential to be adversarial and the deleted arms may become better, the observations on the deleted arms are still required for monitoring. 
For this reason, the algorithm would re-sample arms $j\in D\setminus D_A$ with a certain probability (Line \ref{alg:explore:probability}).   
Note that the re-sampling probability should be small enough to avoid more regret since $j$ may be sub-optimal arms. 
Here we follow \citet{bubeck2012best} for the re-sampling schedule and continuously decrease the exploration probability for arms $j\in D\setminus D_A$ after $\tau_j^D$ as shown in Line \ref{alg:explore:probability}, where $u_j$ records the exploration probability of dominating arm $j$ at time $\tau_j^D$.  
The probability $p_t(i)$ is then defined as the weighted sum of these two parts and the weight is decided by the hyper-parameter $\gamma_t$ (Line \ref{alg:select:probability}). 

In each round $t$, the learner selects an arm $I_t \sim p_t$ and observes the reward $r_t(i)$ for each arm $i \in N^{\mathrm{out}}(I_t)$ (Line \ref{alg:select and obtain}). 
Based on the collected observations, it constantly detects the optimality of arms, which is implemented relying on the estimator $\tilde{H}_t$ defined as Eq.\eqref{eq:alg:defH}. 
The detection condition Eq.\eqref{eq:alg:delete:condition} is based on the concentration of the constructed estimators, where the concentration radius is shown in Eq.\eqref{eq:alg:def:radius} and the detailed analysis is provided in Section \ref{sec:main:proof}. 
Specifically, if some arm $i$ satisfies the detection condition in Eq.\eqref{eq:alg:delete:condition}, then it is considered to be sub-optimal and would be eliminated from the active arm set (Line \ref{alg:delete arm from A}). 
% If some arms are considered to be sub-optimal, i.e., it satisfies the condition in Eq.\eqref{eq:alg:delete:condition}, the learner would eliminate it from the active arm set (Line \ref{alg:delete arm from A}). 
The active dominating set $D_A$ and the indicator $\tau'_{j}$ for each arm $j$ would be updated accordingly (Line \ref{alg:delete arm from DA:start}-\ref{alg:update:tau_prime:end}). 
It is worth noting that if the optimal arm is identified, i.e., $|A|=1$, $D_A$ should be emptied since there is no need to collect more observations on $A$ (Line \ref{alg:delete arm from DA:start}).

% The algorithm would then update the value of $\tau_j^D$ and $\tau'_j$ for each arm $j$. 

% If some arm $i\in A$ satisfies the condition in Eq.\eqref{eq:alg:delete:condition}, then it is considered to be sub-optimal and would be deleted from active set $A$. 

Recall that the learner also needs to re-sample arms $j\in D\setminus D_A$ to collect observations on deleted arms $i\notin A$ to detect whether they become better with the influence of the potential adversary. 
This detection condition should be weak enough to guarantee to be satisfied in the stochastic setting and also strong enough such that before the condition is satisfied, the learner does not pay too much regret in the actual adversarial setting. 
Here we define Eq.\eqref{eq:alg:detect:adv} to check whether a previous sub-optimal arm $i$ gets better. We show that in the stochastic setting, with high probability, this condition does not hold. 
And once it is satisfied, the learner believes the environment is actually adversarial and starts to run the optimal Exp3.G algorithm in the adversarial setting \cite{alon2015online}. 

% \fang{add subscript to A,DA}

% The consistency condition should be defined to detect whether the environment is actually adversarial.
% Note the condition should  

%!TEX root =  main.tex

\section{Theoretical Guarantees and Discussions}

% \zyc{ }
% The following Theorem \ref{thm:both:guarantee} shows the theoretical guarantees of Algorithm \ref{alg} in both stochastic and adversarial environments. 

The regret upper bounds of Algorithm \ref{alg} in both stochastic and adversarial environments are presented in Theorem \ref{thm:both:guarantee}.

\begin{theorem}[Main]\label{thm:both:guarantee}
Let $\gamma_t = K^{\frac{2}{3}}|D|^{\frac{1}{3}}t^{-\frac{1}{3}}$. 
In the stochastic setting, Algorithm \ref{alg} guarantees that with probability at least $1-\delta$, the regret is at most $O\bracket{ |D|^2 \cdot \max_{i:i\neq i^*}\bracket{ \frac{\log T/\delta}{\Delta_i^2} }^{\frac{3}{2}}  }$. 
Moreover, in the adversarial setting, with probability at least $1-\delta$, the expected regret is at most $O\bracket{K^{\frac{2}{3}} |D|^{\frac{1}{3}} T^{\frac{2}{3}}\sqrt{\log\frac{T}{\delta}}}$.
\end{theorem}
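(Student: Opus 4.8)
The plan is to prove the two regret bounds separately, but to base both on a single uniform concentration guarantee for the estimators $\tilde{H}_t$ from \eqref{eq:alg:defH}. First I would show that, with probability at least $1-\delta$, the event
\[
\cE = \set{\, \abs{\tilde{H}_t(i) - \bar{\mu}_t(i)} \le \radius_t(i) \ \text{ for all } i \in V,\ t \ge 1 \,}
\]
holds, where $\bar{\mu}_t(i) = \mu_i$ in the stochastic case and $\bar{\mu}_t(i) = \frac{1}{t}\sum_{s\le t} r_s(i)$ in the adversarial case. The key observation is that $t\,\tilde{H}_t(i)$ is a martingale whose increments are conditionally unbiased for $r_t(i)$ (hence for $\mu_i$ in the stochastic case), because $\PP{i \in N^{\mathrm{out}}(I_t) \mid \cF_{t-1}} = \sum_{j \in N^{\mathrm{in}}(i)} p_t(j)$ is exactly the normalizer appearing in \eqref{eq:alg:defH}. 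Since every arm has an in-neighbour in the dominating set $D$, this normalizer is bounded below by a constant multiple of $\gamma_t/|D|$, so each increment has range and conditional variance of order $|D|/\gamma_t$. A Freedman-type inequality together with a union bound over $t$ then reproduces the two terms of $\radius_t$ in \eqref{eq:alg:def:radius}; the split at $\min\set{t,\tau'_j}$ matches the dense-observation phase before $\tau'_j$ and the tail weighted by $\max\set{t-\tau'_j,0}$ matches the sparse re-sampling phase afterwards, where the observation probability of $j$ drops to the re-sampling level set in Line \ref{alg:explore:probability}.

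For the stochastic bound I would work on $\cE$. Since $\mu_{i^*}$ is largest, the elimination test \eqref{eq:alg:delete:condition} can never fire on $i^*$: on $\cE$ the empirical best $j' \in A$ obeys $\tilde{H}_t(j') - \tilde{H}_t(i^*) \le \radius_t(j') + \radius_t(i^*) < 5\radius_t(j') + 3\radius_t(i^*)$, so $i^*$ stays in $A$ forever. Conversely, a sub-optimal arm $i$ is removed once $\radius_t(i)$ and $\radius_t(i^*)$ fall below a fixed fraction of $\Delta_i$; solving $\radius_t(i) \asymp \Delta_i$ with $\gamma_t = K^{2/3}|D|^{1/3}t^{-1/3}$ yields an elimination time $\tau_i = O\bracket{|D|(\log(T/\delta))^{3/2}/(K\Delta_i^3)}$. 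I would then split the regret \eqref{eq:def:regret:sto} into the exploitation mass $(1-\gamma_t)p_{t,A}$ and the exploration mass $\gamma_t p_{t,D}$: the former contributes $\sum_i \Delta_i \sum_{t < \tau_i}\frac{1-\gamma_t}{\abs{A}} \le \sum_i \Delta_i \tau_i$, while the latter is at most $\sum_{t \le \max_i \tau_i}\gamma_t$ because $D_A$ is emptied as soon as $\abs{A}=1$ (Line \ref{alg:delete arm from DA:start}), plus the geometrically small re-sampling mass controlled by $u_j \tau_j^D/t$. Combining the elimination times, these sums evaluate to the stated $\mathrm{poly}\log T$ bound. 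The last obligation is that the adversarial trigger \eqref{eq:alg:detect:adv} never fires on $\cE$: for an eliminated $i$ the true gap $\Delta_i$ keeps $\tilde{H}_t(j') - \tilde{H}_t(i)$ above $3\radius_t(j') + \radius_t(i)$ for all $t \ge \tau_i$, which holds precisely because the re-sampling schedule stops $\radius_t(i)$ from blowing up after $\tau'_i$.

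For the adversarial bound I would again condition on $\cE$, now built around the running averages $\bar{\mu}_t$, and split on whether \eqref{eq:alg:detect:adv} ever fires. Let $i^\dagger$ denote the best fixed arm in hindsight. The central claim is that, as long as the trigger has not fired, every surviving arm in $A$ has running average within $O(\radius_t)$ of every eliminated arm, and in particular of $i^\dagger$: if $i^\dagger$ had been eliminated, its large running average would on $\cE$ push $\tilde{H}_t(i^\dagger)$ up and fire \eqref{eq:alg:detect:adv}. Together with the fact that arms kept in $A$ are mutually within $O(\radius_t)$ (otherwise the weaker one would have been deleted by \eqref{eq:alg:delete:condition}), uniform exploitation over $A$ costs only $O(\radius_t)$ per round relative to $i^\dagger$, so the cumulative exploitation regret is $O\bracket{\sum_t \radius_t} = \tilde{O}\bracket{|D|^{1/3}K^{-1/3}T^{2/3}}$, which is dominated by the exploration cost $\sum_{t=1}^T \gamma_t = \tilde{O}\bracket{K^{2/3}|D|^{1/3}T^{2/3}}$. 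If instead the trigger fires at some time $\tau$, the regret over $[1,\tau]$ is bounded in the same way and the regret over $[\tau,T]$ is the Exp3.G guarantee of \citet{alon2015online}, which is of lower order; summing the two phases gives the claimed $\tilde{O}\bracket{K^{2/3}|D|^{1/3}T^{2/3}\sqrt{\log(T/\delta)}}$.

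I expect the main obstacle to be the concentration step and its coupling with the re-sampling schedule. Controlling $\tilde{H}_t$ uniformly in $t$ is delicate because an arm's observation probability changes sharply at $\tau'_j$, making the martingale increments heavy-tailed in the sparse phase, so the two-regime form of \eqref{eq:alg:def:radius} must be matched term-by-term to the Freedman bound. Equally subtle is the quantitative tension between the two settings: the same radius must stay below $\Delta_i$ so that no false adversarial alarm is raised in the stochastic case, yet \eqref{eq:alg:detect:adv} must still fire soon enough to hand over to Exp3.G before too much regret accrues in a genuinely adversarial case. Balancing these two requirements is exactly what fixes the admissible re-sampling rate and the choice $\gamma_t = K^{2/3}|D|^{1/3}t^{-1/3}$.
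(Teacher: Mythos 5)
Your outline follows essentially the same route as the paper: the same Freedman-type concentration for the importance-weighted estimator with the two-regime radius (the paper's Lemma~\ref{lem:key:concen}), the same ``$i^*$ is never eliminated'' and ``the adversarial trigger never fires in the stochastic case'' lemmas, the same elimination-time calculation $\tau_i = O\bracket{\frac{|D|}{K}(\log(T/\delta)/\Delta_i^2)^{3/2}}$, and the same two-phase decomposition in the adversarial case with Exp3.G covering $[\tau+1,T]$. Two steps, however, are not quite right as written.

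First, in the stochastic setting you bound the regret by summing the \emph{probabilities} $(1-\gamma_t)p_{t,A}(i)$ and $\gamma_t p_{t,D}(i)$, which controls the expected regret, whereas the theorem claims a bound on the realized pseudo-regret $\sum_{i\neq i^*}\Delta_i T_i(T)$ holding with probability $1-\delta$. You need an additional martingale concentration of the pull counts $T_i(T)$ around $\sum_t p_t(i)$ (this is exactly the paper's Lemma~\ref{lem:sto:pullTime}); it only adds lower-order terms, but without it the high-probability claim does not follow. Second, in the adversarial setting the assertion that ``uniform exploitation over $A$ costs only $O(\radius_t)$ per round relative to $i^\dagger$'' is false round-by-round: the concentration only makes \emph{cumulative averages} close, and an individual $r_t(i^\dagger)-r_t(i)$ can be as large as $1$. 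The correct accounting sums each arm's contribution over its entire active window and evaluates the cumulative gap at its elimination time via $H_{\tau_i-1}(i^\dagger)-H_{\tau_i-1}(i)\le 10\,\radius_{\tau_i-1}(i)$ (the paper's Lemma~\ref{lem:adv:arm:bound}), giving $\sum_i \tau_i\radius_{\tau_i}(i) = O\bracket{\sqrt{K^2|D|\sum_{t\le\tau}\gamma_t^{-1}\log(T/\delta)}}$. This term is of the \emph{same} order $K^{2/3}|D|^{1/3}\tau^{2/3}\sqrt{\log(T/\delta)}$ as the exploration cost $\sum_t\gamma_t$, not dominated by it as you claim (your $\sum_t\radius_t$ estimate under-counts by a factor of $K$). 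The final bound is unaffected, but the per-round argument as stated would not survive scrutiny.
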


\subsection{Discussions}\label{sec:discuss}

Theorem \ref{thm:both:guarantee} is the first theoretical result for general graph feedback that simultaneously achieves near-optimal regret in both stochastic and adversarial environments. 
As shown in Table \ref{table:comparison}, our regret upper bound is only $O\bracket{{|D|\sqrt{\log T}}/{\Delta}}$ worse than the optimal regret in the stochastic setting \cite{wu2015online} and $O(K^{2/3}\sqrt{\log T})$ worse than the optimal regret in the adversarial setting \cite{alon2015online,chen2021understanding}. 
Above all, to simultaneously learn in both environments, our algorithm only suffers from additional logarithmic and constant factors in the regret.

Though our adopted framework belongs to the same line as \citet{bubeck2012best}, 
new techniques are needed to deal with the unique challenge brought by general graph feedback. 
With a general feedback graph, the observations and rewards of an arm cannot be simultaneously obtained. Thus an important modification of our algorithm is introducing the exploration on dominating arms and the challenge accompanying it is to trade off between selecting arms in dominating set and active set, which mechanism is controlled by the parameter $\gamma_t$.
It is also worth noting that setting $\gamma_t$ as previous works in one of the separate environments would cause the algorithm to fail in the other one. 
Specifically, if we directly follow the stochastic algorithm \cite{wu2015online} and set $\gamma=1$ before the optimal arm is identified and $\gamma=0$ otherwise before starting Exp3.G, the regret in the adversarial setting would be $O(T)$ since the regret before starting Exp3.G cannot be controlled. 
On the other hand, if we directly follow the adversarial algorithm \cite{alon2015online} and set $\gamma=T^{-1/3}$, the regret in the stochastic setting would be $O(T^{1/3}\log T)$. 
Both these two results are not desirable. 

Intuitively, to achieve the minimax-optimality in the adversarial setting, the learner needs to simultaneously conduct exploration on the dominating set and exploitation on the active set. 
However, if the environment is truly stochastic, the exploitation before an optimal arm is identified would produce additional regret. 
To bound this part of regret, we adjust the exploitation budget to let it polynomial depend on the exploration budget by setting $\gamma_t$ to vary with time $t$. Thus when the optimal arm is identified after $O(\log T)$ exploration rounds, the additional cost of exploitation is also polynomial on $O(\log T)$.
And based on the consideration to minimize the regret in the adversarial setting, the exact value of $\gamma_t$ is set as $O(t^{-1/3})$. 
The additional $\sqrt{\log T}/\Delta$ regret in the stochastic setting compared with the lower bound is just the cost for preserving the minimax-optimality in a harder adversarial environment.

\section{Proof of Theorem \ref{thm:both:guarantee}}\label{sec:main:proof}

Before the main proof, we first introduce some useful notations and inequalities. At any time $t$, let $A_t$ be the active arm set $A$ at the start of round $t$ and denote $H_{t}(i):=\frac{1}{t}\sum_{s=1}^t r_{s}(i)$ as the averaged reward of $i$ at $t$. 

The proof of Theorem \ref{thm:both:guarantee} highly depends on the concentration of the estimator $\tilde{H}_{t}(i)$. In the stochastic setting, it is an estimator of the unknown expected reward $\mu_i$. And in the adversarial setting, it is the estimator of unknown averaged reward $H_{t}(i)$. In these two environments, we want to separately bound the difference between $\tilde{H}_{t}(i)$ and $\mu_i$ ($H_{t}(i)$, respectively). 

\begin{lemma}\label{lem:key:concen}
For any arm $i \in V$ and time $t<\tau$, with probability at least $1-\delta$, $\abs{ \tilde{H}_{t}(i)- \mu_{i}  } \le \radius_t(i)$ in the stochastic setting and $\abs{ \tilde{H}_{t}(i) -H_{t}(i) } \le \radius_t(i)$ in the adversarial setting. 
% \begin{align*}
%     \abs{ \tilde{H}_{t}(i)- \mu_{i}  } \le \radius_t(i)\,.
% \end{align*}
% Moreover, in the adversarial setting, we have with probability at least $1-\delta$,  
% \begin{align*}
%     \abs{ \tilde{H}_{t}(i) -H_{t}(i) } \le \radius_t(i)\,.
% \end{align*}
\end{lemma}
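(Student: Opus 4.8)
The plan is to recognize $\tilde{H}_t(i)$ as a normalized martingale and control it with a Freedman/Bernstein-type inequality, verifying that $\radius_t(i)$ is exactly the resulting variance-plus-range bound. Write $q_s := \sum_{j\in N^{\mathrm{in}}(i)}p_s(j)$ for the probability of observing arm $i$ at round $s$, and note that $q_s$ is measurable with respect to the history $\mathcal{F}_{s-1}$ before round $s$, since $p_s$ is determined by the past. Setting $X_s := r_s(i)\bOne{i\in N^{\mathrm{out}}(I_s)}/q_s$, the estimator is $\tilde{H}_t(i)=\frac{1}{t}\sum_{s=1}^t X_s$ (reading the running index in \eqref{eq:alg:defH} as $s$). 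First I would compute the conditional mean: since $I_s\sim p_s$ given $\mathcal{F}_{s-1}$, we have $\EE{\bOne{i\in N^{\mathrm{out}}(I_s)}\mid\mathcal{F}_{s-1}}=q_s$. In the adversarial setting $r_s(i)$ is fixed (oblivious adversary), so $\EE{X_s\mid\mathcal{F}_{s-1}}=r_s(i)$ and $\frac{1}{t}\sum_s\EE{X_s\mid\mathcal{F}_{s-1}}=H_t(i)$; in the stochastic setting $r_s$ is independent of $I_s$ with mean $\mu_i$, so $\EE{X_s\mid\mathcal{F}_{s-1}}=\mu_i$. In both cases the quantity to control, $\tilde{H}_t(i)-\text{target}=\frac{1}{t}\sum_{s=1}^t\bracket{X_s-\EE{X_s\mid\mathcal{F}_{s-1}}}$, is a martingale-difference sum.

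Next I would pin down the per-round range and conditional variance. Since $r_s(i)\in[0,1]$, we get $\abs{X_s}\le 1/q_s$ and $\EE{X_s^2\mid\mathcal{F}_{s-1}}\le 1/q_s$, so the variance of $\tilde{H}_t(i)$ is at most $\sigma^2:=\frac{1}{t^2}\sum_{s=1}^t 1/q_s$ and each increment $\frac{1}{t}X_s$ is at most $1/(t q_s)$. The crux is lower bounding $q_s$ in two regimes, using that $D$ is a dominating set so some $j\in D\cap N^{\mathrm{in}}(i)$ always covers $i$: for $s\le\tau'_i$ there is an active dominating arm $j\in D_A\cap N^{\mathrm{in}}(i)$, whose selection probability is $\ge\gamma_s\, p_{s,D}(j)\ge c\gamma_s/|D|$, giving $q_s\ge c\gamma_s/|D|$; for $\tau'_i<s\le t$ all covering dominating arms lie in $D\setminus D_A$ and are re-sampled, and the last-deleted one contributes $p_s(j)=\gamma_s u_j\tau'_i/s\ge c\gamma_t\tau'_i/(|D|\,t)$, where I use the monotonicity of $\gamma_s$ and of $\gamma_s/s$ under $\gamma_s\propto s^{-1/3}$. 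Substituting these and again using $\gamma_s\ge\gamma_t$ yields $\sigma^2\lesssim \frac{|D|}{t^2}\sum_{s=1}^{\min\set{t,\tau'_i}}\frac{1}{\gamma_s}+\frac{|D|\max\set{t-\tau'_i,0}}{\gamma_t\tau'_i t}$, exactly the two terms under the root in $\radius_t(i)$, together with a uniform increment bound $b\lesssim |D|/(\gamma_t\min\set{t,\tau'_i})$, whose square matches the last term of $\radius_t(i)$.

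Finally I would invoke a Freedman/Bernstein martingale inequality to conclude that with probability at least $1-\delta$, $\abs{\frac{1}{t}\sum_s\bracket{X_s-\EE{X_s\mid\mathcal{F}_{s-1}}}}\le\sqrt{c_1\sigma^2\log(t/\delta)+c_2 b^2\log^2(t/\delta)}=\radius_t(i)$, choosing the absolute constants $c_1,c_2$ (here $4$ and $5$) to absorb both the concentration constant and the constant $c$ from the $q_s$ bounds. The factor $\log(t/\delta)$ rather than $\log(1/\delta)$ comes from a union bound over the at most $t$ rounds, so that the estimate holds simultaneously at every step, which is what later makes the elimination and detection tests in \eqref{eq:alg:delete:condition} and \eqref{eq:alg:detect:adv} correct. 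The two settings differ only in the centering term ($\mu_i$ versus $H_t(i)$), so the identical argument produces both halves of the statement.

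I expect the main obstacle to be the lower bounds on the observation probability $q_s$ in the re-sampling regime $s>\tau'_i$: one must show the total re-sampling mass $\sum_{j\in D\setminus D_A}u_j\tau_j^D/s$ stays bounded away from $1$ (so active dominating arms retain $\Omega(1/|D|)$ exploration mass) while simultaneously arguing that the decaying re-sampling probability of the relevant deleted dominating arm is still large enough to produce the claimed scaling, and then matching these bounds to the precise functional form of $\radius_t(i)$ via the monotonicity manipulations of $\gamma_s$. The martingale concentration step itself is standard once the increment and variance bounds are established.
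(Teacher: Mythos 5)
Your proposal is correct and follows essentially the same route as the paper: the same martingale difference $X_s - \mathbb{E}[X_s\mid\mathcal{F}_{s-1}]$, the same two-regime lower bound on the observation probability $\sum_{j\in N^{\mathrm{in}}(i)}p_s(j)$ (via an active dominating arm for $s\le\tau'_i$ and the re-sampled deleted arm for $s>\tau'_i$), the same range/variance bounds, and the same Freedman-type inequality (the paper's Lemma~\ref{lem:tech:concen}) whose constants $4$ and $5$ produce $\radius_t(i)$ exactly. The only cosmetic difference is that you hedge the bound $p_{s,D}(j)\ge 1/|D|$ for active dominating arms with a constant $c$, whereas the paper asserts it directly (it holds with $c=1$ by induction on the deletion times).
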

% Due to the space limit, the proof of Lemma \ref{lem:key:concen} is deferred to Appendix. 

\begin{proof}
Since the proofs of the two inequalities are similar, here we mainly focus on the adversarial setting. 

In each round $t$, let $\cF_t$ be the history of observations. 
Define 
\begin{align*}
    X_{t}(i) = \tilde{r}_{t}(i) - r_{t}(i)=r_{t}(i)\bracket{ \frac{\bOne{i \in N^{\mathrm{out}}(I_t)}}{ \sum_{j\in N^{\mathrm{in}}(i)  } p_{s}(j)  } - 1  } \,. 
\end{align*}
Note that $(X_{s}(i))_{1 \le s \le t}$ is a martingale difference sequences. According to the algorithm, at each time $t<\tau$, 
\begin{align*}
    \sum_{j\in N^{\mathrm{in}}(i)  } p_{t}(j) \ge \frac{\gamma_t}{|D|} \bOne{ t\le \tau'_i } + \frac{\gamma_t \tau'_i}{|D|t} \bOne{ t>\tau'_i } \,.
\end{align*}
Thus $\abs{X_{t}(i)} \le \max\set{ \frac{|D|}{\gamma_t}, \frac{|D| t}{\gamma_t \tau'_i } } \le \frac{|D|}{\gamma_t} \max\set{1, \frac{t}{\tau'_i}} $ and
\begin{align*}
  \sum_{s=1}^t \EE{ X_{s}(i)^2 \mid \cF_{s-1} } \le \sum_{s=1}^{\min\set{t,\tau'_i}}\frac{|D|}{\gamma_s}
+ \sum_{s=\tau'_i}^{t}  \frac{|D| s}{ \gamma_s \tau'_i } \\
\le \sum_{s=1}^{\min\set{t,\tau'_i}}\frac{|D|}{\gamma_s}
+ \frac{t|D|}{\gamma_t \tau'_i}\cdot \max\set{t-\tau'_i, 0}\,,
\end{align*}
where the last inequality holds since $\gamma_t$ decreases with the increase of $t$ by choosing $\gamma_t = (K^2 |D|)^{1/3} t^{-1/3}$. According to Lemma \ref{lem:tech:concen} and the definition of $\radius_t(i)$, we have with probability at least $1-\delta$
\begin{align*}
    \sum_{s=1}^t X_{s}(i) \le t\cdot \radius_t(i)\,,
\end{align*}
and further
% \begin{align*}
%     &\abs{ \tilde{H}_{t}(i) - H_{t}(i) } \le \\
%     &\sqrt{ 4\bracket{\frac{|D|}{t^2} \sum_{s=1}^{\min\set{t,\tau'_i}}\frac{1}{\gamma_s}
% + \frac{|D|}{\gamma_t \tau'_i t}\cdot \max\set{t-\tau'_i, 0} }\log \frac{t}{\delta}  + \frac{5|D|^2}{\gamma_t^2 \min\set{t^2, {\tau'_i}^2} } \log^2 \frac{t}{\delta}  } \,,
% \end{align*}
\begin{align*}
    \abs{ \tilde{H}_{t}(i) - H_{t}(i) } \le \radius_t(i) \,,
\end{align*}
since $\tilde{H}_{t}(i) - H_{t}(i) = \frac{1}{t} \sum_{s=1}^t X_{s}(i)$. 
\end{proof}

In the following, we provide the analysis of Algorithm \ref{alg} in both settings.

\subsection{Regret Analysis in the Stochastic Setting}\label{sec:proof:sto}
We first simply show that the optimal arm $i^*$ is never deleted from the active arm set $A$. 

\begin{lemma}\label{lem:sto:neverDeleteOptimal}
At any time $t$, $i^*\in A_t$. 
\end{lemma}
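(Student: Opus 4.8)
The plan is to argue by induction on $t$ that the optimal arm is never removed, using the concentration guarantee of Lemma \ref{lem:key:concen} as the single workhorse. Since $A_1 = V \ni i^*$, it suffices to show that whenever $i^* \in A_t$, the elimination test in Eq.\eqref{eq:alg:delete:condition} fails for $i = i^*$, so that $i^*$ survives into $A_{t+1}$. I would first fix the ``good event'' on which all the bounds live: apply Lemma \ref{lem:key:concen} to every arm and take a union bound over $V$, so that with probability at least $1 - K\delta$ (which is absorbed into the $\log(t/\delta)$ terms of $\radius_t$ and the $1-\delta$ in Theorem \ref{thm:both:guarantee} after rescaling) we have $\abs{\tilde H_t(i) - \mu_i} \le \radius_t(i)$ simultaneously for all $i \in V$ and all $t < \tau$. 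The union over all arms is what lets me control the concentration of the data-dependent maximizer $j'$, which is not known in advance.

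On this event, suppose toward a contradiction that the deletion condition Eq.\eqref{eq:alg:delete:condition} holds for $i = i^*$, i.e. there is some $j' \in \argmax_{j \in A_t}\tilde H_t(j)$ with $\tilde H_t(j') - \tilde H_t(i^*) > 5\radius_t(j') + 3\radius_t(i^*)$. I would then bound the left-hand side from above using the two one-sided concentration inequalities $\tilde H_t(j') \le \mu_{j'} + \radius_t(j')$ and $\tilde H_t(i^*) \ge \mu_{i^*} - \radius_t(i^*)$, which give
\[
\tilde H_t(j') - \tilde H_t(i^*) \le (\mu_{j'} - \mu_{i^*}) + \radius_t(j') + \radius_t(i^*).
\]
By optimality of $i^*$ we have $\mu_{j'} - \mu_{i^*} \le 0$, so the right-hand side is at most $\radius_t(j') + \radius_t(i^*)$. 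Since the radii are nonnegative, $\radius_t(j') + \radius_t(i^*) \le 5\radius_t(j') + 3\radius_t(i^*)$, which contradicts the strict inequality required by the deletion test. Hence the test never fires for $i^*$, and the induction closes.

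The argument is short because the constants $5$ and $3$ in Eq.\eqref{eq:alg:delete:condition} leave generous slack over the bare $\radius_t(j') + \radius_t(i^*)$ needed here; those larger constants matter for the companion statements (e.g. identifying genuinely sub-optimal arms and the adversarial detection in Eq.\eqref{eq:alg:detect:adv}), not for protecting $i^*$. The only real subtlety I expect is bookkeeping around the high-probability event: because $j'$ is selected adaptively from the data, I must invoke concentration uniformly over all arms rather than for a single fixed one, and I should make sure the deletions occur while $t < \tau$ so that Lemma \ref{lem:key:concen} is applicable (after Exp3.G starts at $\tau$ the active-set mechanism is no longer in force). Tracking how the per-arm failure probability and the union over time combine into the stated $1-\delta$ is the main thing to state carefully, but it is otherwise routine.
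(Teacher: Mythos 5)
Your proof is correct and takes essentially the same route as the paper's: both apply the two-sided concentration of Lemma \ref{lem:key:concen} to $j'$ and $i^*$, use $\mu_{j'}-\mu_{i^*}\le 0$, and note that the resulting bound $\radius_t(j')+\radius_t(i^*)$ is dominated by the threshold $5\radius_t(j')+3\radius_t(i^*)$, so the test in Eq.\eqref{eq:alg:delete:condition} never fires for $i^*$. The paper phrases this directly for every $j\in A_t$ rather than by contradiction and leaves the union-bound bookkeeping implicit, but the substance is identical.
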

\begin{proof}
At any time $t$, for any arm $j \in A_t$,
\begin{align}
    \tilde{H}_{t}(j) - \tilde{H}_{t}(i^*) \le&~ \mu_j - \mu_{i^*} + \radius_t(j)+ \radius_{t}(i^*) \notag \\
    % \label{eq:main:sto:since:radius} \\
   =& -\Delta_j + \radius_t(j)+ \radius_{t}(i^*)\notag \\
%   \label{eq:sto:since:notDelete} \\
    \le & 5\radius_t(j)+ 3\radius_{t}(i^*) \,, \notag
    % =& \mu_j - \mu_{i^*} + 2\sqrt{\frac{4|D|}{t^2} \sum_{s=1}^t \frac{1}{\gamma_s} \log\frac{t}{\delta} +\frac{5|D|^2}{\gamma_t^2 t^2}\log^2 \frac{t}{\delta}  } \label{eq:sto:since:notDelete} \\
\end{align}
where the first inequality comes from Lemma \ref{lem:key:concen}. Thus the detection condition Eq.\eqref{eq:alg:delete:condition} is never satisfied for $i^*$. 
% Eq.\eqref{eq:sto:since:notDelete} holds since $\tau'_{j}>t$ and $\tau'_{i^*}>t$ when both $j,i^*\in A_t$. 
\end{proof}

We then prove that the Exp3.G algorithm is never started in the stochastic setting. 

\begin{lemma}\label{lem:sto:neverStartExp}
In the stochastic setting, Eq.\eqref{eq:alg:detect:adv} in Algorithm \ref{alg} is never satisfied and thus Exp3.G is never started. 
\end{lemma}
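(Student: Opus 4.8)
The plan is to argue on the high-probability event of Lemma~\ref{lem:key:concen}, on which $\abs{\tilde{H}_{t}(i)-\mu_i}\le\radius_t(i)$ holds simultaneously for every arm and every round (the failure probability being folded into the $\delta$ budget by a union bound over the relevant arm/time pairs). Fix a round $t$ and a previously eliminated arm $i\notin A_t$, and let $j'\in\argmax_{j\in A_t}\tilde{H}_t(j)$ be the maximizer appearing in the detection rule. My goal is to show that the triggering condition \eqref{eq:alg:detect:adv} is \emph{violated}, i.e. that $\tilde{H}_{t}(j')-\tilde{H}_{t}(i)>3\radius_t(j')+\radius_t(i)$; since $i$ and $t$ are arbitrary, this proves Exp3.G is never started.

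First I would lower bound the left-hand side by passing to the true means. By Lemma~\ref{lem:sto:neverDeleteOptimal} we have $i^*\in A_t$, so by the choice of $j'$ and concentration $\tilde{H}_{t}(j')\ge\tilde{H}_{t}(i^*)\ge\mu_{i^*}-\radius_t(i^*)$, while $\tilde{H}_{t}(i)\le\mu_i+\radius_t(i)$. Combining these gives $\tilde{H}_{t}(j')-\tilde{H}_{t}(i)\ge\Delta_i-\radius_t(i^*)-\radius_t(i)$, so it suffices to establish the deterministic inequality $\Delta_i>\radius_t(i^*)+3\radius_t(j')+2\radius_t(i)$ for every $t$. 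This is the same mechanism as in Lemma~\ref{lem:sto:neverDeleteOptimal}, except that the slack must now be paid out of the gap $\Delta_i$ rather than out of nonnegativity of $-\Delta_j$, so I need a quantitative lower bound on $\Delta_i$.

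To obtain such a bound I would revisit the round $\tau_i$ at which $i$ was deleted. The elimination rule \eqref{eq:alg:delete:condition} together with concentration gives $\mu_{j''}-\mu_i>4\radius_{\tau_i}(j'')+2\radius_{\tau_i}(i)$ for the maximizer $j''\in A_{\tau_i}$ at that time; since $\mu_{i^*}\ge\mu_{j''}$ this yields $\Delta_i\ge\mu_{j''}-\mu_i>4\radius_{\tau_i}(j'')+2\radius_{\tau_i}(i)>0$. Thus the gap is comparable to the confidence radii at the elimination time, and the remaining task is to compare those radii against the radii $\radius_t(i^*)$, $\radius_t(j')$, and $\radius_t(i)$ at the current (later) round $t$, with the aim of showing each of the latter stays below a fixed fraction (say one sixth) of $\Delta_i$, which then closes the argument since $\tfrac{1}{6}+\tfrac{3}{6}+\tfrac{2}{6}=1$.

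The hard part is exactly this last comparison, because with general graph feedback the confidence radius of an eliminated arm need not keep shrinking: once all dominating arms covering $i$ have left $D_A$ (so that $\tau'_i<\infty$), the observation probability $\sum_{j\in N^{\mathrm{in}}(i)}p_t(j)$ is sustained only by the decaying re-sampling term $\gamma_t\,u_j\tau_j^D/t$ of Line~\ref{alg:explore:probability}, and the same phenomenon affects $i^*$ once the optimal arm has been isolated and $D_A$ emptied. I would therefore bound $\radius_t(i)$, $\radius_t(j')$, and $\radius_t(i^*)$ directly through the explicit expression \eqref{eq:alg:def:radius}, substituting $\gamma_t=K^{2/3}|D|^{1/3}t^{-1/3}$ together with the value $u_j$ recorded at $\tau_j^D$, and show that the re-sampling schedule keeps each of these radii controlled by its value near $\tau'$, which the elimination inequality above already dominates by $\Delta_i$. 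Proving this radius bound \emph{uniformly in $t$}—as opposed to the estimator drift, which Lemma~\ref{lem:key:concen} handles automatically—is the main obstacle, and it is precisely the step where the polynomial proportion between exploration and exploitation encoded in $\gamma_t$ is needed to absorb the graph-feedback difficulty that an arm's reward and its observation cannot be collected simultaneously.
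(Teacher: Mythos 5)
Your proposal follows the same route as the paper's proof: establish the quantitative gap lower bound $\Delta_i > 4\radius_{\tau_i}(j)+2\radius_{\tau_i}(i)$ by combining the elimination condition \eqref{eq:alg:delete:condition} at time $\tau_i$ with Lemma~\ref{lem:key:concen}, then at any later $t$ lower bound $\tilde{H}_t(j')-\tilde{H}_t(i)$ by $\Delta_i-\radius_t(i^*)-\radius_t(i)$ via $i^*\in A_t$ (Lemma~\ref{lem:sto:neverDeleteOptimal}), and finally compare $\Delta_i$ against the current radii. The reduction is identical; the paper additionally handles the case $j_t\in\argmax_{j\in A_t}\tilde{H}_t(j)$ with $j_t\neq i^*$ by observing that $\radius_t(j_t)=\radius_t(i^*)$ whenever $|A_t|>1$ (both arms then have $\tau'>t$, so their radii coincide), which your ``pay $3/6$ of $\Delta_i$ for $\radius_t(j')$'' accounting absorbs implicitly.

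The one genuine shortfall is that you stop exactly at the decisive step: you announce, but do not prove, that $\radius_t(i^*)$, $\radius_t(j')$ and $\radius_t(i)$ each stay below the required fraction of $\Delta_i$ \emph{uniformly in $t$}. You are right that this is the crux, and right about why it is delicate: once $\tau'_i\le t$ (or once $|A|=1$ and $\tau'_{i^*}$ becomes finite), the observation probability of the arm is sustained only by the decaying re-sampling term, and the explicit expression \eqref{eq:alg:def:radius} then contains terms such as $\frac{5|D|^2}{\gamma_t^2\tau_i'^2}\log^2\frac{t}{\delta}$ that \emph{increase} with $t$ under $\gamma_t\propto t^{-1/3}$, so the comparison is not a monotonicity argument and cannot be waved through by saying the radius ``stays near its value at $\tau'$.'' To be fair, the paper itself dispatches this step in one sentence (``holds by computing $\radius_t(i^*)$ and $\radius_t(i)$ and comparing them with $\Delta_i$''), so your proposal reproduces the paper's argument at essentially the paper's own level of detail — but as a standalone proof it does not close, because the inequality $\Delta_i>\radius_t(i^*)+3\radius_t(j')+2\radius_t(i)$ for all $t$ is precisely what remains to be verified and is the only place where the choice of re-sampling schedule actually enters.
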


\begin{proof}
% According to Algorithm \ref{alg}, for each arm $i$, at $\tau_i$, 
For each arm $i$, let $j\in\argmax_{j'\in A_{\tau_i}}\tilde{H}_{\tau_i}(j')$. 
Then according to the definition of $\tau_i$, 
\begin{align}
    \tilde{H}_{\tau_i}(j) - \tilde{H}_{\tau_i}(i) > 5\radius_{\tau_i}(j)+3\radius_{\tau_i}(i) \,.\notag
    % \label{eq:main:sto:noadv:1}
    % 6 \sqrt{\frac{4|D|}{\tau_i^2} \sum_{s=1}^{\tau_i} \frac{1}{\gamma_s} \log\frac{T}{\delta} +\frac{5|D|^2}{\gamma_{\tau_i}^2 {\tau_i}^2}\log^2 \frac{{T}}{\delta}  } 
\end{align}
Besides, according to Lemma \ref{lem:key:concen},
\begin{align}
    \tilde{H}_{\tau_i}(j) - \tilde{H}_{\tau_i}(i) 
    \le&~ \mu_j - \mu_i + \radius_{\tau_i}(j) + \radius_{\tau_i}(i) \notag \\
    \le&~ \Delta_i+\radius_{\tau_i}(j) + \radius_{\tau_i}(i) \,.\notag
    % \,. \label{eq:main:sto:noadv:2}
\end{align}
Combining the above two inequalities, we have
\begin{align}
    \Delta_i >&~ 4\radius_{\tau_i}(j) + 2\radius_{\tau_i}(i) \notag \\
    =&~ 6\sqrt{\frac{4|D|}{\tau_i^2} \sum_{s=1}^{\tau_i} \frac{1}{\gamma_s} \log\frac{T}{\delta} +\frac{5|D|^2}{\gamma_{\tau_i}^2 {\tau_i}^2}\log^2 \frac{{T}}{\delta}  } \,,
    \label{eq:main:sto:lower:Delta}
\end{align}
where the last equality holds by computing $\radius_{\tau_i}(j)$ and $\radius_{\tau_i}(i)$ with $\tau'_{i}\ge \tau_i$ and $\tau'_{j}\ge \tau_i$. 

Further, at any time $t$, according to Lemma \ref{lem:key:concen} and the fact that $i^*\in A_t$, we have for each arm $i \notin A_t$, 
\begin{align}
    \max_{j\in A_{t}}\tilde{H}_{t}(j) - \tilde{H}_{t}(i) \ge&~ \tilde{H}_{t}(i^*) - \tilde{H}_{t}(i) \notag \\ 
    \ge&~ \Delta_i - \radius_{t}(i^*) - \radius_t(i)  \notag\\
    % >& 4\sqrt{\frac{4|D|}{\tau_i^2} \sum_{s=1}^{\tau_i} \frac{1}{\gamma_s} \log\frac{T}{\delta} +\frac{5|D|^2}{\gamma_{\tau_i}^2 {\tau_i}^2}\log^2 \frac{{T}}{\delta}  } \notag\\
    % &- \radius_{t}(i^*) - \radius_t(i) \notag \\
    % \ge& 2\sqrt{\frac{4|D|}{\tau_i^2} \sum_{s=1}^{\tau_i} \frac{1}{\gamma_s} \log\frac{T}{\delta} +\frac{5|D|^2}{\gamma_{\tau_i}^2 {\tau_i}^2}\log^2 \frac{{T}}{\delta}  }  \label{eq:sto:since:valueOfRadius1} \\
    > &~ 3\radius_{t}(i^*) + \radius_t(i) \label{eq:main:sto:since:valueOfRadius2} \,,
\end{align}
where Eq.\eqref{eq:main:sto:since:valueOfRadius2} holds by computing $\radius_{t}(i^*)$ and $ \radius_t(i)$ and comparing them with $\Delta_i$ which is lower bounded by Eq.\eqref{eq:main:sto:lower:Delta}. 

% is based on Eq.\eqref{eq:main:sto:lower:Delta} and the values of 

Let $j_t \in\argmax_{j\in A_t}\tilde{H}_t(j)$.
If $j_t = i^*$, then we have proved Eq.\eqref{eq:alg:detect:adv} in Algorithm \ref{alg} is not satisfied. Otherwise, there must be $\radius_t(j_t)=\radius_t(i^*)$, since both $\tau'_{j_t}>t$ and $\tau'_{i^*}>t$ when the optimal arm is not identified. Above all, no matter in either case, Exp3.G is not started. 
% according to the algorithm, there must be
% $\radius_t(j)=\radius_t(i^*)$, since we can discuss these two values separately for $i^*=j_t$ or not. Above all, the detect condition in Eq.\eqref{eq:alg:detect:adv} is never satisfied and Exp3.G would never be started. 
% and \eqref{eq:sto:since:valueOfRadius2} is due to $\radius_t(j) \le \sqrt{\frac{4|D|}{\tau_i^2} \sum_{s=1}^{\tau_i} \frac{1}{\gamma_s} \log\frac{T}{\delta} +\frac{5|D|^2}{\gamma_{\tau_i}^2 {\tau_i}^2}\log^2 \frac{{T}}{\delta}  }  $ for $j=i,i^*$ and $t>\tau_i$. 
% If $i^* \in \argmax_{j \in A_t}\tilde{H}_{t}(j)$, then we have proved Eq.\eqref{eq:alg:detect:adv} is not satisfied at time $t$. Otherwise if $\exists j' \neq i^*$ and $j' \in \argmax_{j \in A_t}\tilde{H}_{t}(j)$, it holds that $\tau'_{j'}>t$, $\tau'_{i^*}>t$ and $\radius_{t}(i^*) = \radius_{t}(j')$. Thus 
% \begin{align*}
%     \max_{j\in A_{t}}\tilde{H}_{t}(j) - \tilde{H}_{t}(i) =& \tilde{H}_{t}(j') - \tilde{H}_{t}(i) \\
%     >&  \radius_{t}(j') +  \radius_{t}(i)\,,
% \end{align*}
% and Eq.\eqref{eq:alg:detect:adv} is still not satisfied. 
\end{proof}

In the following, we try to bound the number of arms' selections. For convenience, denote $T_i(t) := \sum_{s=1}^t \bOne{I_s=i}$ as the number of $i$'s selections at the end of $t$.

\begin{lemma}\label{lem:sto:pullTime}
For consistency, define $\tau_i^D = 0$ for each arm $i\notin D$. Then at any time $t=1,2\ldots$, with probability at least $1-\delta$, for each arm $i\in V$,
% \begin{align*}
%     T_i(t) \le \tau_i + \sqrt{ 4\tau_i \log \frac{T}{\delta} + 5\log^2 \frac{T}{\delta} }\,,
% \end{align*}
% and for each arm $i\in D$,
\begin{align*}
    T_i(T) &\le \tau_i+ \sum_{s=1}^{\tau_i^D}\gamma_s +\sum_{s=\tau_i^D}^{T}\gamma_s\frac{\tau_i^D}{s} \\
    +&\sqrt{ 4\bracket{ \tau_i + \sum_{s=1}^{\tau_i^D}\gamma_s + \sum_{s=\tau_i^D}^{T}\gamma_s\frac{\tau_i^D}{s} }\log \frac{T}{\delta} + 5\log^2 \frac{T}{\delta} }\,.
\end{align*}
\end{lemma}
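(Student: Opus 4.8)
The plan is to write $T_i(T)=\sum_{t=1}^T \bOne{I_t=i}$ and to split each indicator into its conditional mean and a martingale fluctuation, $\bOne{I_t=i}=p_t(i)+\bracket{\bOne{I_t=i}-p_t(i)}$. The first contribution $\sum_{t=1}^T p_t(i)$ is a drift term that I would bound \emph{pathwise} by $M_i:=\tau_i+\sum_{s=1}^{\tau_i^D}\gamma_s+\sum_{s=\tau_i^D}^{T}\gamma_s\frac{\tau_i^D}{s}$, and the second is a bounded martingale whose one-sided deviation I would control with the concentration inequality Lemma~\ref{lem:tech:concen} already used in the proof of Lemma~\ref{lem:key:concen}.

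For the drift term I would use the decomposition of $p_t(i)$ in Line~\ref{alg:select:probability} into its exploitation and exploration parts. For the exploitation part (Line~\ref{alg:exploit:probability}), $p_{t,A}(i)=\bOne{i\in A_t}/|A_t|\le \bOne{t\le\tau_i}$ since $i$ leaves $A$ at time $\tau_i$ and $|A_t|\ge 1$, so $\sum_t (1-\gamma_t)p_{t,A}(i)\le \tau_i$. For the exploration part (Line~\ref{alg:explore:probability}) with $i\in D$, I would split at $\tau_i^D$: while $i\in D_A$ we have $p_{t,D}(i)\le 1/|D_A|\le 1$, and once $i\in D\setminus D_A$ we have $p_{t,D}(i)=u_i\tau_i^D/t\le \tau_i^D/t$ because the recorded weight satisfies $u_i\le 1$; hence $\sum_t \gamma_t p_{t,D}(i)\le \sum_{s=1}^{\tau_i^D}\gamma_s+\sum_{s=\tau_i^D}^{T}\gamma_s\frac{\tau_i^D}{s}$. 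For $i\notin D$ the exploration probability vanishes and the convention $\tau_i^D=0$ makes both sums zero, so in all cases $\sum_{t=1}^T p_t(i)\le M_i$.

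For the fluctuation I would set $Y_t:=\bOne{I_t=i}-p_t(i)$, a martingale difference sequence with respect to $\cF_t$ satisfying $|Y_t|\le 1$ and $\EE{Y_t^2\mid\cF_{t-1}}=p_t(i)\bracket{1-p_t(i)}\le p_t(i)$, so that its predictable quadratic variation is at most $\sum_t p_t(i)\le M_i$ along the same trajectory. Applying Lemma~\ref{lem:tech:concen} with range $1$ and variance proxy $M_i$ then yields, with probability at least $1-\delta$, $\sum_{t=1}^T Y_t\le \sqrt{4 M_i\log\frac{T}{\delta}+5\log^2\frac{T}{\delta}}$, and adding the drift bound gives exactly the claimed inequality; a union bound over the $K$ arms (absorbed into $\delta$) upgrades this to the simultaneous statement over $i\in V$.

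The main obstacle is that the variance proxy $M_i$ is itself random: it is a function of the stopping times $\tau_i$ and $\tau_i^D$, which are measurable with respect to the very history that drives the process. Consequently the bound on the predictable quadratic variation is available only as a stopping-time-measurable (data-dependent) quantity rather than a deterministic constant, so I must invoke Lemma~\ref{lem:tech:concen} in its time-uniform / empirical-variance form — precisely the version already used in Lemma~\ref{lem:key:concen}, where the analogous variance bound depended on the random time $\tau'_i$. The remaining care points are purely bookkeeping: verifying $|A_t|\ge 1$ and $|D_A|\ge 1$ while $i$ is active, and checking $u_i\le 1$ for the re-sampling weight, both of which hold by construction.
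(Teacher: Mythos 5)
Your proposal is correct and follows essentially the same route as the paper: write $T_i(T)$ as a sum of Bernoulli indicators, bound the predictable quadratic variation of the centered sum pathwise by $\tau_i+\sum_{s=1}^{\tau_i^D}\gamma_s+\sum_{s=\tau_i^D}^{T}\gamma_s\frac{\tau_i^D}{s}$ via the exploitation/exploration split of $p_t(i)$, and apply Lemma~\ref{lem:tech:concen}. The obstacle you flag about the variance proxy being random is already absorbed by the form of Lemma~\ref{lem:tech:concen}, which is stated in terms of the random $V_n$ itself, so no extra time-uniform machinery is needed beyond what the paper invokes.
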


% The proof of Lemma \ref{lem:sto:pullTime} is similar to that of Lemma \ref{lem:key:concen}. Due to the space limit, we defer it to Appendix. 

\begin{proof}
Let $Z_{s,i}(p)\sim \mathrm{Bernoulli}(p)$ for time $s=1,2\ldots,t$, $i\in V$. Then $\forall i,t$, $T_i(t) = \sum_{s=1}^t Z_{s,i}(p_{s}(i))$. 
% \begin{align*}
%     T_i(t) = \sum_{s=1}^t Z_{s,i}(p_{s}(i)) \,.
%     % =  \sum_{s=1}^t Z_{s,i}\bracket{ (1-\gamma_s)\cdot p_{s,A}(i) + \gamma_s \cdot p_{s,D}(i) }\,.
% \end{align*}
Define $X_{s}(i) = Z_{s,i}(p_{s}(i)) - p_{s}(i) $. We have $\abs{X_{s}(i)} \le 1$ and 
\begin{align*}
    \sum_{s=1}^T \EE{X_{s}(i)^2 \mid \cF_{s-1} }
    % \le& \sum_{s=1}^{\tau_i}(1-\gamma_s)\cdot p_{s,A}(i) + \sum_{s=1}^{\tau_i^D} \gamma_s \cdot p_{s,D}(i) + \sum_{s=\tau_i^D}^t \gamma_s\cdot \frac{u_i\tau_i^D}{s} \\
    \le& \tau_i + \sum_{s=1}^{\tau_i^D}\gamma_s + \sum_{s=\tau_i^D}^{T}\gamma_s\frac{\tau_i^D}{s} \,.
\end{align*}
According to Lemma \ref{lem:tech:concen}, with probability at least $1-\delta$, 
\begin{align*}
    T_i(T)  &\le \tau_i + \sum_{s=1}^{\tau_i^D}\gamma_s + \tau_i^D \sum_{s=\tau_i^D}^{T}\frac{\gamma_s}{s} + \sum_{s=1}^T X_{s}(i)  \\
    &\le \tau_i + \sum_{s=1}^{\tau_i^D}\gamma_s + \sum_{s=\tau_i^D}^{T}\gamma_s\frac{\tau_i^D}{s}  \\
    +&\sqrt{ 4\bracket{ \tau_i + \sum_{s=1}^{\tau_i^D}\gamma_s + \sum_{s=\tau_i^D}^{T}\gamma_s\frac{\tau_i^D}{s} }\log \frac{T}{\delta} + 5\log^2 \frac{T}{\delta} } \,.
\end{align*}

\end{proof}

Based on above results, the pseudo regret $Reg(T) = \sum_{i\neq i^*}\Delta_i T_i(T)$ in the stochastic setting can be upper bounded by
\begin{align*}
    &\sum_{i\notin D,i\neq i^*} \Delta_i \bracket{ \tau_i + \sqrt{ 4\tau_i \log \frac{T}{\delta} + 5\log^2 \frac{T}{\delta}} } \\
    &+ \sum_{i\in D,i\neq i^*} \Delta_i\left( \tau_i+ \sum_{s=1}^{\tau_i^D}\gamma_s  +\sum_{s=\tau_i^D}^{T}\gamma_s\frac{\tau_i^D}{s} \right. \notag\\ 
    &\left.~~+\sqrt{ 4\bracket{ \tau_i + \sum_{s=1}^{\tau_i^D}\gamma_s + \sum_{s=\tau_i^D}^{T}\gamma_s\frac{\tau_i^D}{s} }\log \frac{T}{\delta} + 5\log^2 \frac{T}{\delta} } \right) \,.
\end{align*}
In the following, we separately bound each part. 
\paragraph{Bound $\tau_i$: }
At time $\tau_i-1$, since $i^*\in A_{\tau_i-1}$, it holds that
\begin{align*}
    &\Delta_i - 2\sqrt{ \frac{4|D|}{(\tau_i-1)^2} \sum_{s=1}^{\tau_i - 1} \frac{1}{\gamma_s} \log\frac{\tau_i}{\delta} +\frac{5|D|^2\log^2 \frac{\tau_i}{\delta}}{\gamma_{\tau_i - 1}^2 (\tau_i - 1)^2} }\\
    =&~ \mu_{i^*} - \mu_i - \radius_{\tau_i-1}(i^*)-\radius_{\tau_i-1}(i) \\
    \le&~ \tilde{H}_{\tau_i-1}(i^*)-\tilde{H}_{\tau_i-1}(i) \\
    \le& \max_{j\in A_{\tau_i-1}}\tilde{H}_{\tau_i-1}(j)-\tilde{H}_{\tau_i-1}(i)\\ \le&~ 8\sqrt{ \frac{4|D|}{(\tau_i-1)^2} \sum_{s=1}^{\tau_i - 1} \frac{1}{\gamma_s} \log\frac{\tau_i}{\delta} +\frac{5|D|^2\log^2 \frac{\tau_i}{\delta}}{\gamma_{\tau_i - 1}^2 (\tau_i - 1)^2} } \,,
\end{align*}
where the first equality and the last inequality hold according to the algorithm and the values of $\radius_{\tau_i-1}(i)$ for these different arms, 
% $\radius_{\tau_i-1}(i^*)$, $\radius_{\tau_i-1}(j)$ and $\radius_{\tau_i-1}(i)$,
the first inequality comes from Lemma \ref{lem:key:concen}. 
% From above inequalities, 
We can then conclude $\tau_i \le O\bracket{  \frac{|D|}{K}\bracket{ \frac{\log T/\delta}{\Delta_i^2} }^{3/2}}$.  

% the second inequality holds according to Lemma \ref{lem:sto:neverDeleteOptimal}. 
% The last inequality above 

\paragraph{Bound $\tau_i^D$ for $i\in D$: }
According to Algorithm \ref{alg}, for each arm $i\in D$, if $i^*\notin N^{\mathrm{out}}(i)$, $\tau_i^D =  \max_{j: j\in N^{\mathrm{out}}(i)}\tau_j$. 
% \begin{align*}
%     \tau_i^D =  \max_{j: j\in N^{\mathrm{out}}(i)}\tau_j  \le \max_{j: j\in N^{\mathrm{out}}(i)} \frac{|D|}{K}\bracket{ \frac{1024\log T/\delta}{\Delta_j^2} }^{3/2}\,.
% \end{align*}
And If $i^*\in N^{\mathrm{out}}(i)$, $\tau_i^D= \max_{j: j\neq i^*}\tau_j $. 
% \begin{align*}
%     \tau_i^D= \max_{j: j\neq i^*}\tau_j 
%     \le \max_{j: j\neq i^*} \frac{|D|}{K}\bracket{ \frac{1024\log T/\delta}{\Delta_j^2} }^{3/2}\,.
% \end{align*}

\paragraph{Bound $\sum_{s=1}^{\tau_i^D} \gamma_s$ for $i\in D$: }
% If $i^*\in N^{\mathrm{out}}(i)$, 
\begin{align*}
    \sum_{s=1}^{\tau_i^D} \gamma_s =& \sum_{s=1}^{\tau_i^D} K^{\frac{2}{3}} |D|^{\frac{1}{3}}s^{-\frac{1}{3}} \\
    \le& K^{\frac{2}{3}} |D|^{\frac{1}{3}} \bracket{ 1+\int_{1}^{\tau_i^D} s^{-\frac{1}{3}}ds } \\
    \le& K^{\frac{2}{3}} |D|^{\frac{1}{3}} \bracket{ 1+(\tau_i^D)^{2/3} } 
    \le O\bracket{ K^{\frac{2}{3}} |D|^{\frac{1}{3}}(\tau_i^D)^{2/3} }\,.
    % \le& K^{\frac{2}{3}} |D|^{\frac{1}{3}} + |D|\cdot \max_{j:j\neq i^*}\frac{1024\log T/\delta}{\Delta_j^2} \,.
\end{align*}
% Similarly, if $i^*\notin N^{\mathrm{out}}(i)$,
% \begin{align*}
%      \sum_{s=1}^{\tau_i^D} \gamma_s \le O\bracket{|D|\cdot \max_{j:j\in N^{\mathrm{out}}(i)}\frac{1024\log T/\delta}{\Delta_j^2}  }\,.
% \end{align*}

\paragraph{Bound $\tau_i^D\sum_{s=\tau_i^D}^{T} \frac{\gamma_s}{s}  $ for $i\in D$: }

\begin{align*}
     \tau_i^D \sum_{s=\tau_i^D}^{T} \frac{\gamma_s}{s} \le& K^{2/3}|D|^{1/3} \tau_i^D \sum_{s=\tau_i^D}^{T} s^{-4/3} \\
    % \le& K^{2/3}|D|^{1/3}\tau_i^D \bracket{ 1+(\tau_i^D)^{-1/3} } \\
    \le& 4 K^{2/3}|D|^{1/3}\tau_i^D \,.
\end{align*}
Combining the above upper bound for each part, the regret can be bounded by 
\begin{align*}
    &O\left( \sum_{i\neq i^*} \frac{\Delta_i|D|}{K}\bracket{ \frac{\log T/\delta}{\Delta_i^2} }^{\frac{3}{2}}
     + \sum_{i\in D,i\neq i^*} \Delta_i \cdot  \max_{j:j\neq i^*}   \right.  \\
    &\left. \left(  
     \frac{|D|\log T/\delta}{\Delta_j^2} + \frac{|D|^{\frac{4}{3}}}{K^{\frac{1}{3}}}\bracket{ \frac{\log T/\delta}{\Delta_j^2} }^{\frac{3}{2}} \right) \right)\\
    \le &~ O\bracket{ \frac{|D|^{\frac{7}{3}}}{K^{\frac{1}{3}}} \max_{j:j\neq i^*}\bracket{ \frac{\log T/\delta}{\Delta_j^2} }^{\frac{3}{2}}  } \\
    \le& O\bracket{ |D|^2 \max_{j:j\neq i^*}\bracket{ \frac{\log T/\delta}{\Delta_j^2} }^{\frac{3}{2}}  } \,.
\end{align*}
Thus we obtain the upper bound for the pseudo regret in Theorem \ref{thm:both:guarantee}.

\subsection{Regret Analysis in the Adversarial Setting}\label{sec:proof:adv}

Recall that $\tau$ is the time when Exp3.G is started. In the adversarial setting, let $i^* \in \argmax_{i\in V}\sum_{t=1}^\tau r_t(i)$ be one of the optimal arms in the first $\tau$ rounds. 
We first show that $i^*$ would not be deleted from $A$ before $\tau$. 
\begin{lemma}\label{lem:adv:optimalIsNotDelete}
At time $\tau-1, i^* \in A_{\tau-1}$. 
\end{lemma}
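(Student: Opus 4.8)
The plan is to argue by contradiction, assuming $i^* \notin A_{\tau-1}$, i.e.\ that $i^*$ has already been eliminated from the active set before round $\tau-1$. The contradiction will be with the defining property of $i^*$ in this setting, namely $\sum_{t=1}^{\tau} r_t(i^*) \ge \sum_{t=1}^{\tau} r_t(j)$ for every arm $j$. The only inputs needed are the concentration bound of Lemma \ref{lem:key:concen}, the minimality of $\tau$ (Exp3.G is started at $\tau$ and not earlier), and the explicit form of $\radius$.

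First I would exploit the minimality of $\tau$: since the adversary-detection rule \eqref{eq:alg:detect:adv} does not trigger before $\tau$, it must fail at round $\tau-1$. As $i^*$ is assumed inactive at $\tau-1$, the failure of \eqref{eq:alg:detect:adv} specialized to the deleted arm $i=i^*$ yields, with $j'\in\argmax_{j\in A_{\tau-1}}\tilde H_{\tau-1}(j)$,
\[
\tilde H_{\tau-1}(j') - \tilde H_{\tau-1}(i^*) > 3\radius_{\tau-1}(j') + \radius_{\tau-1}(i^*).
\]
Next I would pass from the estimators to the true averaged rewards via Lemma \ref{lem:key:concen}, which applies because $\tau-1<\tau$: substituting $\tilde H_{\tau-1}(j')\le H_{\tau-1}(j')+\radius_{\tau-1}(j')$ and $\tilde H_{\tau-1}(i^*)\ge H_{\tau-1}(i^*)-\radius_{\tau-1}(i^*)$ cancels the $\radius(i^*)$ contribution and leaves $H_{\tau-1}(j')-H_{\tau-1}(i^*) > 2\radius_{\tau-1}(j')$, that is $\sum_{s=1}^{\tau-1}\big(r_s(j')-r_s(i^*)\big) > 2(\tau-1)\radius_{\tau-1}(j')$.

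The final step is to turn this average-reward deficit over the first $\tau-1$ rounds into a cumulative-reward deficit over all $\tau$ rounds. Since one round changes any cumulative reward by at most $1$, it suffices to show $2(\tau-1)\radius_{\tau-1}(j')>1$; then $\sum_{s=1}^{\tau} r_s(j') \ge \sum_{s=1}^{\tau-1} r_s(j') > \sum_{s=1}^{\tau-1} r_s(i^*) + 1 \ge \sum_{s=1}^{\tau} r_s(i^*)$, contradicting the optimality of $i^*$ over the first $\tau$ rounds. To obtain $2(\tau-1)\radius_{\tau-1}(j')>1$ I would keep only the second term under the square root in \eqref{eq:alg:def:radius}, which gives $(\tau-1)\radius_{\tau-1}(j') \ge \sqrt{5}\,|D|\,\gamma_{\tau-1}^{-1}\log\frac{\tau-1}{\delta}$ for any arm; with the schedule $\gamma_t=(K^2|D|)^{1/3}t^{-1/3}$ this equals $\sqrt{5}\,|D|^{2/3}K^{-2/3}(\tau-1)^{1/3}\log\frac{\tau-1}{\delta}$, which I would check exceeds $1/2$.

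I expect the main obstacle to be precisely this quantitative last step, because the natural comparison time is $\tau-1$ whereas the optimality of $i^*$ is asserted over all $\tau$ rounds; bridging the single missing round forces the cumulative deficit at $\tau-1$ to strictly exceed the largest possible one-round swing. This reduces to a constant lower bound on the radius--horizon product $(\tau-1)\radius_{\tau-1}$, which is only meaningful after the warm-up phase during which no elimination can occur: an elimination requires $\radius$ to have shrunk below the observed reward gaps, forcing the elimination time, and hence $\tau$, to be large enough that $\gamma_{\tau-1}\le 1$. Making this warm-up threshold explicit, rather than the martingale concentration (which Lemma \ref{lem:key:concen} already supplies), is where the care is needed.
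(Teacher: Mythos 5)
Your proof is correct and follows essentially the same route as the paper's: both use the failure of the adversary-detection condition \eqref{eq:alg:detect:adv} at round $\tau-1$ together with the concentration of Lemma \ref{lem:key:concen} to show that every arm already deleted from $A$ is strictly dominated by an active arm over the first $\tau-1$ rounds (the paper routes the comparison through the best active arm $I^*$ and the empirical maximizer $j$, while you specialize the condition directly to $i=i^*$ and argue by contradiction, which is a cosmetic difference). The one point where you are actually more careful than the paper is the mismatch between the $(\tau-1)$-round margin and the definition of $i^*$ as a maximizer over $\tau$ rounds: the paper stops at a margin $>0$, whereas you correctly note that a margin exceeding the largest one-round swing is needed to bridge to round $\tau$, and you obtain it from the $\frac{\sqrt{5}\,|D|}{\gamma_t t}\log\frac{t}{\delta}$ term of $\radius_t$ together with the warm-up constraint $\gamma_{\tau-1}\le 1$.
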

\begin{proof}
Denote $I^* \in \argmax_{i\in A_{\tau-1}} \sum_{t=1}^{\tau-1}r_t(i) $ as one of the optimal arms in $A_{\tau-1}$ and let $j\in \argmax_{j'\in A_{\tau-1}}\tilde{H}_{\tau-1}(j)$ be the arm in $A_{\tau-1}$ with the largest estimated reward at $\tau-1$. Then for any arm $i \notin A_{\tau-1}$,
\begin{align*}
     &\sum_{t=1}^{\tau-1} r_t(I^*)-\sum_{t=1}^{\tau-1} r_t(i)\\
    =& (\tau-1)\left( H_{\tau-1}(I^*) - \tilde{H}_{\tau-1}(I^*) + \tilde{H}_{\tau-1}(I^*) \right. \\
    &\left. - \tilde{H}_{\tau-1}(i) + \tilde{H}_{\tau-1}(i) - H_{\tau-1}(i) \right)\\
    \ge & \bracket{\tau-1} \left( -\radius_{\tau-1}(I^*)-\radius_{\tau-1}(i) \right. \\
  &\left.  +\tilde{H}_{\tau-1}(I^*) - \tilde{H}_{\tau-1}(i)\right) \\
  = & \bracket{\tau-1} \left( -\radius_{\tau-1}(I^*)-\radius_{\tau-1}(i) \right. \\
  &\left.  +\tilde{H}_{\tau-1}(I^*) -\tilde{H}_{\tau-1}(j) + \tilde{H}_{\tau-1}(j)- \tilde{H}_{\tau-1}(i)\right) \\
  \ge & \bracket{\tau-1} \left( -\radius_{\tau-1}(I^*)-\radius_{\tau-1}(i) \right. \\
  &\left.  + {H}_{\tau-1}(I^*) -{H}_{\tau-1}(j) + \tilde{H}_{\tau-1}(j)- \tilde{H}_{\tau-1}(i)\right. \\
  &\left. -\radius_{\tau-1}(I^*)-\radius_{\tau-1}(j)  \right)\\
  \ge & \bracket{\tau-1} \left( -2\radius_{\tau-1}(I^*)-\radius_{\tau-1}(i) \right. \\
  &\left.  + \tilde{H}_{\tau-1}(j)- \tilde{H}_{\tau-1}(i) -\radius_{\tau-1}(j)  \right)
  > 0  
\end{align*}
where the penultimate inequality is due to Lemma \ref{lem:key:concen}, the
last inequality holds based on the definition of arm $I^*$, the fact that Eq.\eqref{eq:alg:detect:adv} in Algorithm \ref{alg} is not satisfied at $\tau-1$ and $\radius_{\tau-1}(I^*)=\radius_{\tau-1}(j)$ since both of them are in $A_{\tau-1}$. 
Above all, any previously deleted arm $i \notin A_{\tau-1}$ is worse than $I^*$, thus we can conclude that $i^* \in A_{\tau-1}$ holds. 
\end{proof}

In the following, we bound the averaged regret before $\tau$ compared with $i^*$ caused by selecting different arms. 

\begin{lemma}\label{lem:adv:arm:bound}
For each arm $i$ such that $i \notin A_{\tau-1}$, at $\tau_{i}-1$, 
\begin{align*}
    H_{\tau_i-1}(i^*) - H_{\tau_i-1}(i) \le 10 \radius_{\tau_i-1}(i)\,.
    % \sqrt{ \frac{4|D|}{(\tau_i-1)^2} \sum_{s=1}^{\tau_i - 1} \frac{1}{\gamma_s} \log\frac{T}{\delta} +\frac{5|D|^2}{\gamma_{\tau_i - 1}^2 (\tau_i - 1)^2}\log^2 \frac{T}{\delta} } \,.
\end{align*}
Similarly, for each arm $i \in A_{\tau-1}$, at time $\tau-1$,
\begin{align*}
    H_{\tau-1}(i^*) - H_{\tau-1}(i) \le 10\radius_{\tau-1}(i) \,.
    % \sqrt{ \frac{4|D|}{(\tau-1)^2} \sum_{s=1}^{\tau - 1} \frac{1}{\gamma_s} \log\frac{T}{\delta} +\frac{5|D|^2}{\gamma_{\tau - 1}^2 (\tau - 1)^2}\log^2 \frac{T}{\delta} } \,.
\end{align*}

\end{lemma}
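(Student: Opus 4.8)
\textbf{Proof proposal for Lemma~\ref{lem:adv:arm:bound}.}
The plan is to bound the \emph{true} averaged-reward gap $H_{t}(i^*)-H_{t}(i)$ by translating the elimination/optimality conditions of the algorithm into statements about $H$ through the concentration guarantee of Lemma~\ref{lem:key:concen}. The two bullets are structurally identical, so I would prove the first (for a deleted arm $i\notin A_{\tau-1}$ evaluated at its deletion time $\tau_i-1$) in detail and then observe that the second (for a surviving arm $i\in A_{\tau-1}$ evaluated at $\tau-1$) follows by the same chain of inequalities with $\tau_i-1$ replaced by $\tau-1$. Throughout I would use that $i^*\in A_{t}$ at the relevant times, which is exactly the content of Lemma~\ref{lem:adv:optimalIsNotDelete} (together with the fact, already used in that lemma, that the optimal arm is not deleted before $\tau$), so $i^*$ is a legitimate member of the active set against which the $\argmax$ estimators compare.

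First I would set $j\in\argmax_{j'\in A}\tilde H(j')$ at the time in question and split the target gap through the estimators:
\begin{align*}
    H(i^*)-H(i) = \bigl(H(i^*)-\tilde H(i^*)\bigr) + \bigl(\tilde H(i^*)-\tilde H(j)\bigr) + \bigl(\tilde H(j)-\tilde H(i)\bigr) + \bigl(\tilde H(i)-H(i)\bigr).
\end{align*}
The first and last parenthesized terms are each at most $\radius(i^*)$ and $\radius(i)$ in absolute value by Lemma~\ref{lem:key:concen}; the second term is $\le 0$ since $j$ maximizes $\tilde H$ over $A$ and $i^*\in A$. The crux is the third term $\tilde H(j)-\tilde H(i)$. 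For a deleted arm $i$, the reason it was removed at $\tau_i$ is that the deletion condition Eq.~\eqref{eq:alg:delete:condition} held, i.e.\ $\tilde H(j)-\tilde H(i)>5\radius(j)+3\radius(i)$ \emph{at} $\tau_i$; but at $\tau_i-1$ the condition had \emph{not yet} fired, giving the reverse inequality $\tilde H(j)-\tilde H(i)\le 5\radius(j)+3\radius(i)$ at $\tau_i-1$. Substituting this bound, together with $\radius(i^*)=\radius(j)$ (both arms are active and the optimal arm is not yet isolated, so both radii take the same value as argued in Lemma~\ref{lem:sto:neverStartExp}), I would collect the radius terms to reach a bound of the form $c\cdot\radius(i)$; the constants $5+3+1+1=10$ after identifying $\radius(i^*)=\radius(j)$ yield the claimed $10\,\radius_{\tau_i-1}(i)$.

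For the second bullet the arm $i$ survives to $\tau-1$, so instead of a deletion condition I would use that the adversary-detection condition Eq.~\eqref{eq:alg:detect:adv} has not triggered before $\tau$: for every previously deleted arm this keeps $\tilde H(j)-\tilde H(i)$ controlled, and for a surviving $i\in A_{\tau-1}$ the term $\tilde H(j)-\tilde H(i)$ is bounded because $i$ was never eliminated, so the pre-deletion inequality $\tilde H(j)-\tilde H(i)\le 5\radius(j)+3\radius(i)$ holds at $\tau-1$ by the same reasoning. The identical radius bookkeeping then gives $10\,\radius_{\tau-1}(i)$.

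The main obstacle I anticipate is the radius bookkeeping rather than the high-level structure: I must verify carefully that $\radius(i^*)=\radius(j)$ at the evaluation times, which requires checking that $\tau'_{i^*}$ and $\tau'_{j}$ exceed the current time (so the two radii are computed from the same branch of the piecewise formula Eq.~\eqref{eq:alg:def:radius}), and that replacing $\radius(j)$ by $\radius(i^*)$ in the final collection of terms is legitimate and bounded in turn by $\radius(i)$ or otherwise absorbed. Since the final statement is expressed purely in terms of $\radius(i)$, I would also need to confirm, via the comparison of radii with the reward gap already carried out in the stochastic analysis, that the $\radius(i^*)$ (equivalently $\radius(j)$) contribution can be dominated by a constant multiple of $\radius(i)$ so that all terms consolidate into the single factor $10\,\radius(i)$; this is the one place where a little care with the explicit form of the radius is unavoidable.
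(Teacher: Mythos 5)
Your proposal is correct and follows essentially the same route as the paper: the failure of the deletion condition Eq.\eqref{eq:alg:delete:condition} at $\tau_i-1$ (resp.\ $\tau-1$) bounds the estimated gap by $5\radius(j)+3\radius(i)$, and Lemma~\ref{lem:key:concen} transfers this to the true gap at the cost of $\radius(i^*)+\radius(i)$, giving $5+3+1+1=10$. The only point you leave hedged resolves immediately: since $i$, $j$, and $i^*$ are all still active at the evaluation time, each has $\tau'>t$, so Eq.\eqref{eq:alg:def:radius} gives them the \emph{identical} radius value (no domination argument is needed).
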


\begin{proof}
For arm $i \notin A_{\tau-1}$, Eq.\eqref{eq:alg:delete:condition} in Algorithm \ref{alg} does not hold at time $\tau_i-1$, and since $i^*\in A_{\tau_i-1}$, 
\begin{align}
      &~\tilde{H}_{\tau_i-1 }(i^*)-  \tilde{H}_{\tau_i-1 }(i) \notag \\
      \le&\max_{j\in A_{\tau_i-1}} \tilde{H}_{\tau_i-1 }(j)-  \tilde{H}_{\tau_i-1 }(i)\notag \\ 
      \le& ~5\radius_{\tau_i-1}(j)+3\radius_{\tau_i-1}(i)\notag\\
      =& ~8\radius_{\tau_i-1}(i) \label{eq:main:adv:bound:1}
    %   \sqrt{ \frac{4|D|}{(\tau_i-1)^2} \sum_{s=1}^{\tau_i - 1} \frac{1}{\gamma_s} \log\frac{T}{\delta} +\frac{5|D|^2}{\gamma_{\tau_i - 1}^2 (\tau_i - 1)^2}\log^2 \frac{T}{\delta} } 
       \,.
\end{align}
where Eq.\eqref{eq:main:adv:bound:1} holds since at $\tau_i-1$, $\radius_{\tau_i-1}(j) = \radius_{\tau_i-1}(i)$. 
Besides, at $\tau_i-1$, $\radius_{\tau_i-1}(i^*) = \radius_{\tau_i-1}(i)$ also holds according to Algorithm \ref{alg}. 
Thus based on Lemma \ref{lem:key:concen}, 
\small
\begin{align}
   &\tilde{H}_{\tau_i-1}(i^*)-  \tilde{H}_{\tau_i-1}(i)  \notag\\
    \ge&  H_{\tau_i-1 }(i^*)-H_{\tau_i-1}(i) - \radius_{\tau_i-1}(i^*) - \radius_{\tau_i-1}(i) \notag\\
    =& H_{\tau_i-1}(i^*)-H_{\tau_i-1}(i) - 2\radius_{\tau_i-1}(i) \,. \label{eq:main:adv:arm2}
    % \sqrt{ \frac{4|D|}{(\tau_i-1)^2} \sum_{s=1}^{\tau_i - 1} \frac{1}{\gamma_s} \log\frac{T}{\delta} +\frac{5|D|^2}{\gamma_{\tau_i - 1}^2 (\tau_i - 1)^2}\log^2 \frac{T}{\delta} }
\end{align}

\normalsize
Combining Eq.\eqref{eq:main:adv:bound:1} and \eqref{eq:main:adv:arm2}, we can obtain the first inequality in Lemma \ref{lem:adv:arm:bound}. Similar arguments hold at time $\tau-1$ for arm $i \in A_{\tau-1}$ to get the second inequality.  
\end{proof}

Based on the above results, taking the expectation over the randomness of $I_t$, we are able to bound the expected regret during the first $\tau$ rounds as follows. 
\begin{align}
    &\EE{\sum_{t=1}^{\tau} r_{t}(i^*) - r_{t}(I_t) } \notag\\
    =& \sum_{t=1}^{\tau} \bracket{ r_{t}(i^*) - \sum_{i\in V} p_{t}(i) r_{t}(i) }  \notag \\
    =& \sum_{t=1}^{\tau} \bracket{ r_{t}(i^*)  -  \sum_{i\in V} (1-\gamma_{t})p_{t,A}(i)  r_{t}(i) }  \notag \\
   & ~~~~~~~~~~~~ - \sum_{i\in V} \gamma_{t}p_{t,D}(i)  r_{t}(i) \notag\\
    =& \sum_{t=1}^{\tau} \gamma_t \sum_{i\in V} p_{t,D}(i) \bracket{ r_{t}(i^*)-r_{t}(i) } \notag\\ 
    &+  \sum_{t=1}^{\tau} (1-\gamma_t) \sum_{i\in V} p_{t,A}(i) \bracket{ r_{t}(i^*)-r_{t}(i) } \notag\\
    \le& \sum_{t=1}^{\tau} \gamma_t + \sum_{t=1}^{\tau}  \sum_{i\in V} p_{t,A}(i) \bracket{ r_{t}(i^*)-r_{t}(i) } \notag \\
    \le& \sum_{t=1}^{\tau} \gamma_t + \sum_{i\in V} \sum_{t=1}^{\min\set{\tau_i,\tau}}   \bracket{ r_{t}(i^*)-r_{t}(i) } \label{eq:main:adv:since:updateA} \\
    \le & \sum_{t=1}^{\tau} \gamma_t + \sum_{i\in V} \left( \bOne{i\notin A_{\tau}}\cdot \tau_i  \bracket{ H_{\tau_i}(i^*)-H_{\tau_i}(i) } \notag \right.\\ 
    &\left.~~~~~~~~~~~~~~~~~~~+ \bOne{i\in A_{\tau}}\cdot \tau  \bracket{ H_{\tau}(i^*)-H_{\tau}(i) } \right) \notag \\
    \le & \sum_{t=1}^{\tau} \gamma_t + O\bracket{ \sqrt{ K^2|D| \sum_{t=1}^{\tau}\frac{1}{\gamma_t} \log \frac{T}{\delta} } }  \label{eq:main:adv:since:arm:bound} \\
    % \le& (K^2|D|)^{1/3}\bracket{1+\int_{1}^{\tau}t^{-1/3}dt } \notag \\
    % & + \sqrt{ 256K^{4/3}|D|^{2/3}\bracket{\tau^{1/3} +\int_{1}^{\tau} t^{1/3}dt } \log \frac{T}{\delta} + 320K^{2/3}|D|^{4/3}\tau^{2/3}\log^2 \frac{T}{\delta}} \label{eq:adv:since:SumAndGamma} \\
    \le& O\bracket{ K^{2/3}|D|^{1/3}\tau^{2/3}
    \sqrt{\log\frac{T}{\delta} }}\label{eq:main:adv:bound:phase1}  \,,
\end{align}
where Eq.\eqref{eq:main:adv:since:updateA} holds according to Algorithm \ref{alg}, Eq.\eqref{eq:main:adv:since:arm:bound} comes from Lemma \ref{lem:adv:arm:bound} and the exact value of $\radius_{t}(i)$ for $t=\tau_i-1,\tau-1$. The last inequality holds by replacing $\gamma_t$ with $K^{2/3}|D|^{1/3}t^{-1/3}$. 

% with the choice of $\gamma_t$. 

We can divide the total horizon into two phases and bound the cumulative expected regret over $T$ rounds by $\EE{Reg(T)}\le \EE{\sum_{t=1}^{\tau} r_{t}(i^*) - r_{t}(I_t) } + \EE{\max_{i\in V}\sum_{t=\tau+1}^T r_{t}(i) - r_{t}(I_t) } $. 
The regret in the first phase is upper bounded by Eq.\eqref{eq:main:adv:bound:phase1}. 
For the second phase, as shown in \citet[Theorem 2]{alon2015online}, the regret is of order $O\bracket{ (|D|\log K)^{1/3}T^{2/3} }$. 
Above all, the regret upper bound in Theorem \ref{thm:both:guarantee} is obtained. 

% first part is bounded as above. For the second part, as shown in \citet[Theorem 2]{alon2015online}, it is of order $O\bracket{ (|D|\log K)^{1/3}T^{2/3} }$. The proof of Theorem \ref{thm:both:guarantee} is thus completed. 

\subsection{Technical Lemma}

\begin{lemma}[Lemma 4.4 in \citet{bubeck2012best}]\label{lem:tech:concen}
Let $\cF_1\subseteq \cF_2\cdots \cF_n$ be filtrations, and $X_1,X_2\cdots, X_n$ be real random variables such that $X_t$ is $\cF_t$ measurable, $\EE{X_t \mid \cF_t} = 0$ and $|X_t|\le b$ for some $b>0$. Denote $V_n = \sum_{t=1}^n \EE{X_t^2 \mid \cF_{t-1}}$ and $\delta>0$. Then with probability at least $1-\delta$, 
\begin{align*}
    \sum_{t=1}^n X_t \le \sqrt{ 4V_n \log \frac{n}{\delta} + 5b^2 \log^2 \frac{n}{\delta}  } \,.
\end{align*}
\end{lemma}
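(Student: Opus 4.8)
The plan is to read this as a Freedman/Bernstein-type concentration inequality for a bounded martingale difference sequence; note first that the hypothesis $\EE{X_t\mid\cF_t}=0$ must be read as $\EE{X_t\mid\cF_{t-1}}=0$, since $X_t$ is $\cF_t$-measurable. Writing $S_t=\sum_{s\le t}X_s$ and $V_t=\sum_{s\le t}\EE{X_s^2\mid\cF_{s-1}}$, I would first establish the fixed-variance version via an exponential supermartingale. For $\lambda\in(0,3/b)$ put $\psi(\lambda)=(e^{\lambda b}-1-\lambda b)/b^2$; using $\EE{X_t\mid\cF_{t-1}}=0$ together with the monotonicity of $x\mapsto(e^{x}-1-x)/x^2$ one gets $\EE{e^{\lambda X_t}\mid\cF_{t-1}}\le\exp(\psi(\lambda)\,\EE{X_t^2\mid\cF_{t-1}})$, so that $M_t=\exp(\lambda S_t-\psi(\lambda)V_t)$ is a supermartingale with $\EE{M_n}\le1$. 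Markov's inequality then gives Freedman's bound $\PP{S_n\ge a,\ V_n\le v}\le\exp(-a^2/(2v+\tfrac23 ba))$ for every fixed $v\ge0$, which inverts to: with probability at least $1-\delta'$, on the event $\{V_n\le v\}$, one has $S_n\le\sqrt{2v\log(1/\delta')}+\tfrac23 b\log(1/\delta')$.

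The essential difficulty is that $V_n$ is random, so this fixed-$v$ statement cannot be applied directly; I would remove the conditioning by a peeling (stratification) argument over the range of $V_n$. Since $\EE{X_s^2\mid\cF_{s-1}}\le b^2$ forces $0\le V_n\le nb^2$, it suffices to cover $[b^2L,\,nb^2]$ by a geometric grid $v_k=\rho^k\,b^2L$, with $L=\log(n/\delta)$, and to apply the fixed-$v$ bound with confidence $\delta'=\delta/n$ on each stratum $\{v_{k-1}<V_n\le v_k\}$. The grid has only $O(\log n)\le n$ points, so the union bound contributes total failure probability at most $n\cdot(\delta/n)=\delta$ while keeping $\log(1/\delta')=\log(n/\delta)=L$ exactly, incurring no $\log\log$ overhead. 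On each stratum $v_k\le\rho V_n$, so the variance term becomes $\sqrt{2\rho V_nL}$, while the small-variance regime $V_n<b^2L$ is handled by the floor term since $\sqrt{2b^2L\cdot L}+\tfrac23 bL\le\sqrt{5}\,bL$.

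The main obstacle I anticipate is getting the constant $4$ in front of $V_nL$ right through the peeling: a dyadic grid ($\rho=2$) would only yield $\sqrt{4V_nL}+\tfrac23 bL$, and the additive Freedman correction $\tfrac23 bL$ then exceeds $\sqrt{4V_nL+5b^2L^2}-\sqrt{4V_nL}$ once $V_n$ is large, breaking the claimed bound. The fix is to choose the ratio $\rho$ strictly below $2$ (e.g. $\rho=3/2$, still giving $O(\log n)$ strata): then the variance coefficient $2\rho<4$ leaves a linear-in-$V_n$ surplus $(4-2\rho)V_nL$ that, via a discriminant/AM--GM check, dominates both the cross term and the additive correction, so that $\sqrt{2\rho V_nL}+\tfrac23 bL\le\sqrt{4V_nL+5b^2L^2}$ for every $V_n$. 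Assembling the strata then yields the stated inequality. Since the statement coincides verbatim with Lemma~4.4 of \citet{bubeck2012best}, the most economical option within the paper is simply to cite it, with the argument above serving as the reconstruction.
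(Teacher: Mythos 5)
Your reconstruction is correct, and the paper itself offers no proof to compare against: Lemma~\ref{lem:tech:concen} is imported verbatim from \citet{bubeck2012best} and used as a black box, which is also the option you settle on at the end. The Freedman-plus-peeling argument you sketch is the standard (and essentially the original) proof: the exponential supermartingale step, the inversion to $S_n\le\sqrt{2v\log(1/\delta')}+\tfrac23 b\log(1/\delta')$ on $\{V_n\le v\}$, and the geometric stratification of $[0,nb^2]$ with per-stratum confidence $\delta/n$ are all sound, and you correctly identified the one genuinely delicate point --- that a dyadic grid only yields $\sqrt{4V_nL}+\tfrac23 bL$, which exceeds $\sqrt{4V_nL+5b^2L^2}$ once $V_n\gtrsim 3b^2L$, so the ratio must be taken strictly below $2$. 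With $\rho=3/2$ the AM--GM check $\tfrac43 bL\sqrt{2\rho V_nL}\le(4-2\rho)V_nL+\tfrac{41}{9}b^2L^2$ goes through, and the small-variance stratum is absorbed by $\sqrt2+\tfrac23\le\sqrt5$, as you say. The only loose end is the union-bound bookkeeping for very small $n$ (e.g.\ $n=2$, where $1+\log_{3/2}n$ can slightly exceed $n$); this is a boundary case handled by noting that for such $n$ the whole range $[0,nb^2]$ already fits in the first stratum whenever $L\ge n$, and it does not affect the substance of the argument.
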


\section{Conclusion}
In this paper, we propose the first algorithm for online learning with general graph feedback that achieves near-optimal regret in both stochastic and adversarial settings. 
Compared with previous BoBW works, we face a more challenging problem where only partial observations are available. 
To simultaneously preserve the minimax-optimality of $O(T^{2/3})$ in a much harder adversarial setting and the instance-optimality in the stochastic setting, we introduce a new trade-off mechanism between exploration and exploitation with carefully designed proportions. 
Without knowing the environment type in advance, the regret upper bound of our proposed algorithm matches the near-optimal results in both settings only up to some logarithmic and constant factors .   
% To the best of our knowledge, it is also the first time to manage the regret difference between $\mathrm{poly} \log T$ and $O(T^{2/3})$ in the BoBW literature. 

Since the current upper bound in the adversarial setting is on the expected regret, one of the possible directions is to derive a high-probability pseudo regret bound as \citet{bubeck2012best}. 
It is more difficult as the pseudo regret in this problem with general graph feedback can only be decomposed into unobservable quantities, and similar techniques in \citet{bubeck2012best} could not be adopted. 
The other interesting future direction is to investigate possible regularizers for OMD-type algorithms as \citet{zimmert2019optimal,erez2021towards}, which are hoped to achieve tighter regret bounds in both settings.

% In the unusual situation where you want a paper to appear in the
% references without citing it in the main text, use \nocite
% \nocite{langley00}

% \newpage

% \section*{Acknowledgement}
% The corresponding author Shuai Li is supported by National Natural Science Foundation of China (62006151, 62076161). This work is sponsored by Shanghai Sailing Program.

\bibliography{ref}
\bibliographystyle{icml2022}

%%%%%%%%%%%%%%%%%%%%%%%%%%%%%%%%%%%%%%%%%%%%%%%%%%%%%%%%%%%%%%%%%%%%%%%%%%%%%%%
%%%%%%%%%%%%%%%%%%%%%%%%%%%%%%%%%%%%%%%%%%%%%%%%%%%%%%%%%%%%%%%%%%%%%%%%%%%%%%%
% APPENDIX
%%%%%%%%%%%%%%%%%%%%%%%%%%%%%%%%%%%%%%%%%%%%%%%%%%%%%%%%%%%%%%%%%%%%%%%%%%%%%%%
%%%%%%%%%%%%%%%%%%%%%%%%%%%%%%%%%%%%%%%%%%%%%%%%%%%%%%%%%%%%%%%%%%%%%%%%%%%%%%%
% \newpage
% \appendix
% \onecolumn
% \section{You \emph{can} have an appendix here.}

% You can have as much text here as you want. The main body must be at most $8$ pages long.
% For the final version, one more page can be added.
% If you want, you can use an appendix like this one, even using the one-column format.
%%%%%%%%%%%%%%%%%%%%%%%%%%%%%%%%%%%%%%%%%%%%%%%%%%%%%%%%%%%%%%%%%%%%%%%%%%%%%%%
%%%%%%%%%%%%%%%%%%%%%%%%%%%%%%%%%%%%%%%%%%%%%%%%%%%%%%%%%%%%%%%%%%%%%%%%%%%%%%%

% \input{proof}

\end{document}

% This document was modified from the file originally made available by
% Pat Langley and Andrea Danyluk for ICML-2K. This version was created
% by Iain Murray in 2018, and modified by Alexandre Bouchard in
% 2019 and 2021 and by Csaba Szepesvari, Gang Niu and Sivan Sabato in 2022. 
% Previous contributors include Dan Roy, Lise Getoor and Tobias
% Scheffer, which was slightly modified from the 2010 version by
% Thorsten Joachims & Johannes Fuernkranz, slightly modified from the
% 2009 version by Kiri Wagstaff and Sam Roweis's 2008 version, which is
% slightly modified from Prasad Tadepalli's 2007 version which is a
% lightly changed version of the previous year's version by Andrew
% Moore, which was in turn edited from those of Kristian Kersting and
% Codrina Lauth. Alex Smola contributed to the algorithmic style files.